\documentclass[11pt]{article}
\input{custom.tex}

\title{A Prior Distribution over Directed Acyclic Graphs for Sparse Bayesian Networks}
\author[2]{Felix L. Rios\thanks{flrios@kth.se}}
\author[1]{John M. Noble\thanks{noble@mimuw.edu.pl}}
\author[2]{Timo J.T. Koski\thanks{tjtkoski@kth.se}}
 \affil[1]{Institute of Applied Mathematics and Mechanics,  University of Warsaw}
 \affil[2]{KTH Royal Institute of Technology}

\begin{document}

\maketitle
\begin{abstract}
\label{abstract}
The main contribution of this article is a new prior distribution over directed acyclic graphs, which gives larger weight to sparse graphs. This distribution is intended for {\em structured} Bayesian networks, where the structure is given by an {\em ordered block model}. That is, the nodes of the graph are objects   which fall into categories (or {\em blocks}); the blocks have a natural ordering. The presence of a relationship between two objects is denoted by an arrow, from the object of lower category to the object of higher category.  The models considered here were introduced in \citet{Kemp2004} for {\em relational data} and extended to {\em multivariate data} in \citet{Mansinghka06structuredpriors}. The prior over graph structures presented here has an explicit formula. The number of nodes in each layer of the graph follow a Hoppe Ewens urn model. 

We consider the situation where the nodes of the graph represent random variables, whose joint probability distribution factorises along the DAG. We describe Monte Carlo schemes for finding the optimal aposteriori structure given a data matrix and compare the performance with \citeauthor{Mansinghka06structuredpriors} and also with the uniform prior.
\end{abstract}
\section{Introduction} 
\label{sec:introduction}
\subsection{The Block Model and Ordered Block Model} The {\em block model} and {\em ordered block model} were introduced by \citet{Kemp2004}(2004) for analysing  {\em relational data}. This is an interesting and versatile idea for the situation where classification of individuals is to be inferred from observing how the individuals relate to each other. Consider $d$ individuals, represented as nodes on a graph. For example, on a humorous note, Kemp et. al. suggest an ecclesiastical gathering where the participants are dressed informally, so that the observer cannot infer the class structure of the group simply from observing each individual separately. Each has an  unknown status; vicar, bishop, archbishop, or any other status within the Anglican hierarchy. The observer is not an ecclesiastical expert and does not know much about the hierarchical structures within Anglicanism. In particular, he does not know a priori the number of classes. He notes how these individuals interact with each other. If one individual behaves deferentially towards another, it may be assumed that he is from a class of lower order; a directed arrow is inserted from the individual of lower order to the individual of higher order, thus producing a directed acyclic graph (DAG), where each individual is represented as a node. It may be that the individuals present at the event do not all know each other; a vicar who meets a bishop will show deference only if he knows the bishop; he therefore does this with probability $p$, where $0 < p < 1$. 

To assess the class structure, one starts with a prior distribution over classifications and then updates to a posterior distribution, given the observed information of the DAG. Either there is a directed edge, or there is no edge, or else no interaction was observed and the edge status is unknown. \citeauthor{Kemp2004} propose a {\em Hoppe urn scheme} model to construct a prior distribution over the class structure. Conditioned on the class structure, they propose a Beta distribution for the edge probability between nodes in different classes, which leads to a joint prior distribution over classifications and DAGs. From this, the posterior distribution over classification, given the DAG, may be computed.

Two models are proposed; the {\em ordered block model} where there is a hierarchy and the classes have a distinct ordering and the {\em block model}, where there is not a hierarchical structure between classes. In this article, we only consider the first of these; the {\em ordered block model}. 

These models are used for wide ranging problems of relational data. Another example developed by \citeauthor{Kemp2004} is to infer the societal structure of aboriginal tribes. They also use the model for experiments in cognition, simulating the process of human learning. Children are given a collection of $d$ objects, each  identical in appearance. Some belong to category $A$, the others to category $B$, although the children are not told the number of different categories in advance. In one experiment, when an object from category $A$ touches one from category $B$, the category $B$ object may light up. There is a probability $p$ that a given category $A$ object activates a given category $B$ object. Activation is represented by a DAG, from which the classification of the objects is to be inferred. The {\em ordered block model} is relevant when objects from category $A$ do not light up; the {\em block model} is relevant when the action is reciprocal.  

\subsection{Multivariate Data and Classification of Variables} Multivariate data comes in the form of an $n \times d$ data matrix ${\bf x}$, where each row represents an independent instantiation of a random $d$-vector with probability distribution $\mathbb{P}_{X_1, \ldots, X_d}$. Throughout, ${\bf x}$ will  denote the matrix of instantiations and ${\bf X}$ to the underlying random matrix. Also, we use $X= (X_1, \ldots, X_d)$ to denote the random vector (taken as a row vector, in accordance with the way that data is presented in a data matrix).

Much work has been carried out concerning {\em classification of instantiations}, where one (or several) of the variables is a {\em class variable} and, given values taken by the other variables, the classification of the instantiation should be inferred. There is less work concerning {\em classification of variables}. 
An important contribution in this direction, where the number of classes is a priori unknown, is found in \citet{Mansinghka06structuredpriors}, who extend the work of \citeauthor{Kemp2004} to accommodate the situation where the {\em individuals} in question, which comprise the node set of the graph, are random variables.  

In many `real world' problems, variables may be grouped into classes which indicate the nature of their probabilistic relations. For example, in the QMR-DT network, introduced by \citet{Qmr91probabilisticdiagnosis}, variables fall into two classes; {\em diseases} and {\em symptoms}, where various diseases may cause various symptoms. 
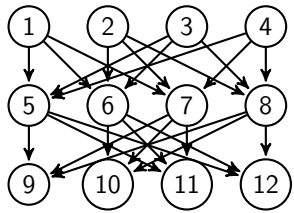
\begin{figure}[htbp]
\centering
   \begin{tikzpicture}[->,>=stealth',shorten >=1pt,auto,node distance=1.5cm,
      thick,main node/.style={circle,scale=0.7,fill=white!10,draw,font=\sffamily\Large}]
      \node[main node] (1) [] {1};
      \node[main node] (2) [right of=1] {2};
      \node[main node] (3) [right of=2] {3};
      \node[main node] (4) [right of=3] {4};
      
      \node[main node] (5) [below of=1] {5};
      \node[main node] (6) [right of=5] {6};
      \node[main node] (7) [right of=6] {7};
      \node[main node] (8) [right of=7] {8};
      
      \node[main node] (9) [below of=5] {9};
      \node[main node] (10) [right of=9] {10};
      \node[main node] (11) [right of=10] {11};
      \node[main node] (12) [right of=11] {12};
      \path[every node/.style={font=\sffamily\small}]
      (1) edge node [] {} (5)
	      edge node [] {} (6)        
	      edge node [] {} (7)        
      (2) edge node [] {} (6)
	      edge node [] {} (7)
	      edge node [] {} (8)
      (3) edge node [] {} (5)
	      edge node [] {} (6)
	      edge node [] {} (8)
      (4) edge node [] {} (5)
	      edge node [] {} (7)
	      edge node [] {} (8)
      (5) edge node [] {} (9)
	      edge node [] {} (11)        
	      edge node [] {} (12)        
      (6) edge node [] {} (10)
	      edge node [] {} (11)
	      edge node [] {} (12)
      (7) edge node [] {} (9)
	      edge node [] {} (10)
	      edge node [] {} (11)
      (8) edge node [] {} (9)
	      edge node [] {} (10)
	      edge node [] {} (12);
    \end{tikzpicture}
\caption{The HEPAR II network.}
\label{fig:heparII}
\end{figure}
In \citet{Mansinghka06structuredpriors}, the HEPAR II network, due to \citet{Onisko00learningbayesian} is considered. The DAG is shown in Figure~\ref{fig:heparII}. It is a Bayesian Network for analysing liver diseases where the variables can be divided into three classes; {\em risk factors}, {\em diseases}, {\em symptoms}. The conditional probabilities for the HEPAR II network have been learned from a database of medical cases.  This example is considered in by \citeauthor{Mansinghka06structuredpriors} when evaluating their algorithm.

Another example is the problem of finding genome pathways. Faced with $n$ instantiations (where $n$ is of the order of thousands) of the expression levels of $d$ genes (where $d$ is of order tens of thousands), the aim is to classify the genes and discover the class of {\em regulators}, which are responsible for controlling the activation of other genes, and which may influence each other.

The aim of the data analysis is therefore two-fold: to find a suitable classification of the variables and also {\em structure learning}; namely, to find a graph which encodes the direct influences that the variables have on each other, where the layers of the graph correspond to the classes of an {\em ordered block model}.

\subsection{Bayesian Networks and Structure Learning}

A collection of random variables $\{X_1, \ldots, X_d\}$ can be ordered in $d$ ways and each gives rise to a factorisation: for a permutation $\sigma$ of $\{1, \ldots, d\}$, 

\begin{equation}\label{eqfact} \mathbb{P}_{X_1, \ldots, X_d} = \prod_{j=1}^d \mathbb{P}_{X_{\sigma(j)}|\mbox{Pa}( \sigma(j))}
\end{equation}

\noindent where, with the ordering $\sigma$, for each $j$, $\mbox{Pa}(X_{\sigma(j)}) \subseteq \{X_{\sigma(1)}, \ldots, X_{\sigma(j-1)}\} =: {\cal P}_j^{\sigma}$ and ${\cal P}_j^{\sigma}$ denotes the {\em $\sigma$-predecessors} of $X_{\sigma(j)}$. For the ordering $\sigma$, for each $j$, the set $\mbox{Pa}(\sigma(j))$ is taken as the smallest subset of ${\cal P}_j^{\sigma}$ such that Equation~\eqref{eqfact} is true.  A factorisation may be expressed by a Directed Acyclic Graph (DAG, also known as acyclic digraph, ADG), a directed acyclic graph which has a directed arrow $j \rightarrow k$ if and only if $X_j \in \mbox{Pa}(k)$. Such a factorisation, together with the corresponding DAG, is known as a Bayesian Network (or BN).

For multivariate data with large numbers of variables, it is usually not feasible to learn the empirical probability distribution; if variable $X_j$ has $k_j$ possible states for $j = 1, \ldots, d$ and no assumptions may be made about the independence structure, then a total of $\prod_{j=1}^d k_j - 1$ values need to be stored. Even if each variable only has two possible states, this value is $2^d - 1$, which is growing exponentially in the number of variables. If the distribution may be factorised according to a BN where the DAG is sparse in the sense that, for each $j \in \{1, \ldots, d\}$ the set $\mbox{Pa}(j)$ is reasonably small, the storage may be reduced considerably. Furthermore, if there is a substantial independence structure that can be exploited, there are fewer parameters to be estimated from the $n$ data points and hence the estimation is more accurate. 

The key result for Bayesian Networks is that a Directed separation (D-separation) statement for the DAG implies that the corresponding conditional independence statement for the probability distribution is true. It is computationally straightforward to verify graphical separation; the whole point of expressing a probability distribution as a Bayesian Network is to use graphical separation algorithms to infer those conditional independence statements for the random variables that correspond to graphical separation statements. 

Unfortunately, in almost all practical situations, there does not exist a DAG such that the converse is true; it is not possible, in general, to find a DAG such that D-connection in the DAG implies the corresponding conditional independence statement. Therefore, the main thrust of the Bayesian Network approach is (a) to find the BN with the sparsest graph along which the probability distribution factorises and (b) conclude conditional independence when there is D-separation. Those conditional independence statements which cannot be expressed in terms of graphical separation are beyond the scope of the Bayesian Network approach.

Representing a probability distribution as a BN is often particularly efficient if the DAG represents a {\em causal} structure that is known in advance. If there are known cause-to-effect relationships between the variables, then the variables are ordered in such a way that a {\em cause} has a lower order than an {\em effect}. It is well established, though, that it is, in general, not possible to infer causality simply from data; if a DAG is learned from data, directed arrows do not in and of themselves suggest that a `cause' to `effect' relationship may be present. 

In some situations, it may happen that there exists a BN such that the set of graphical separation statements for the DAG is  equivalent to the set of conditional independence statements for the probability distribution. In such a situation, the DAG is said to be {\em faithful} to the probability distribution. Faithfulness is extremely rare, but it can occur in cases where all the variables influencing the system are observable and where there is a natural causal ordering between the variables.   

Structure learning algorithms fall broadly into two categories; {\em constraint based} and {\em search and score}. There are also hybrid algorithms which are a mixture of the two, which try to take the best features from both. While constraint based algorithms are, in general, more economical, they have well established problems. These are mostly connected with lack of faithful graph. A DAG contains an edge between $X$ and $Y$ if and only if there is no subset $S$ which $D$-separates $X$ and $Y$, but those algorithms that work on the principle of removing an edge $X \sim Y$ whenever a conditioning set $S$ is found such that $X \perp Y|S$ (practically all constraint based algorithms) are guaranteed to perform badly in  `real world' situations, since only synthetic data simulated from a distribution with a faithful graph has any serious chance of  coming from a distribution which has a faithful graph (they perform well in such situations, which usually form the basis of the criteria used for evaluation). Other minor difficulties (minor because these algorithms perform badly even if there is a perfect oracle) are connected with the power of the conditional independence tests when there is a large set of conditioning variables. Edges are wrongly removed because `do not reject' independence is wrongly taken to mean `accept' independence, hence deletion of an edge; weak tests lead to false negatives, which contradict other `reject conditional independence' statements derived from tests which are more reliable, because the statement `reject the null hypothesis' conforms to the principles of statistical theory, while taking `do not reject the null hypothesis' to mean `accept the null hypothesis' violates these principles.

Search and score methods do not have these difficulties.   A score function is defined on the space of DAGs and the aim is to find the DAG which gives the highest score. They are more rubust, but computationally substantially more expensive.  

Exhaustive search  by    scoring    every structure   is usually not computationally feasible; a good search and score algorithm will try to find favourable regions in the search space. Since not all structures are visited, there are no guarantees that the optimal graph will be chosen; the aim is to locate a reasonable structure that encapsulates the main features of the independence relations between the variables.  

One way to score the graphs is to consider a prior distribution over graph structures $\mathbb{P}_{{\cal G}}$, the Cooper-Herskovits graph likelihood function $\mathbb{P}_{{\bf X}|{\cal G}}$, which is the probability distribution over ${\bf X}$ given the graph structure.  The posterior is proportional to: $\mathbb{P}_{{\cal G}}\mathbb{P}_{{\bf X}|{\cal G}}$ and this may be used as a  score function. The structure which maximises this score maximises the posterior distribution and hence we call it the {\em maximum   aposteriori structure}   (MAPS).   The Cooper-Herskovits likelihood is well known, with a convenient closed form. To complete the score function, a prior distribution $\mathbb{P}_{{\cal G}}$ over structures has to be established. 

\subsection{Distributions over Graph Structures} The choice of prior distribution $\mathbb{P}_{{\cal G}}$ is clearly important for search-and-score based algorithms. One choice is the {\em uniform prior}; for $d$ nodes, the probability of choosing a particular DAG $G$ is:   $\mathbb{P}_{{\cal G}}(G)= a_d^{-1}$,  where $a_d$ is the well known  number of  DAGs with $d$ nodes. \citet{kuipers2013uniform}(2013) show  how to sample     a  DAG from a uniform distribution.

A straightforward way to generate a random DAG, where the distribution is not uniform, is: firstly, take a random permutation of the $d$ nodes $\sigma$, each permutation with probability $\frac{1}{d!}$. Next, randomly generate an upper triangular matrix $D$ where elements $D_{ij} = 0$ for $i \geq j$ and the other elements of the matrix are assigned the value $1$ with probability $p$ and $0$ with probability $1-p$. The graph then has directed edge $\sigma(i) \rightarrow \sigma(j)$ if and only if $D_{ij} = 1$.     

Conditioned on the permutation $\sigma$, the  probability of  obtaining  a DAG with exactly $k \leq \frac{d(d-1)}{2}$  edges is   $p^k(1-p)^{d(d-1)/2-k}$. Computing the marginal probability of a given graph structure is a difficult problem here; the number of different permutations giving rise to the same graph depends on the number of different nodes used, which depends on the number of connected components;

\[ \left\{ \begin{array}{l} \mathbb{P}_{\cal G}(G) = \frac{n(G)}{d!} p^k(1-p)^{d(d-1)/2-k} \\ \mbox{$G$ has $k$ edges; $n(G)$ is number of permutations where this graph is possible}. \end{array}\right. \]

\noindent There are various approaches to the construction of prior distributions $\mathbb{P}_{{\cal G}}$ corresponding to practical 
prior   information  about  the structure of the  DAG.  
In \citep{mukherjee2008network}, priors of the  form 

\[ \mathbb{P}_{\cal G}(G) \propto  e^{\lambda\sum_{i} w_{i} f_{i}(G)},\]

\noindent where  the functions $\{  f_{i}(G)   \} $ are called {\em concordance functions}. These functions are constructed from prior information about features that the graph is likely to possess; e.g. individual edges, classes of  vertices, sparsity and  degree distributions. 

\section{Outline of the Method and Results} 
\label{sec:outline_of_the_method_and_results}
As with \citet{Mansinghka06structuredpriors}, we consider situations where there is a natural, but unknown, partition of the {\em variables} into classes and the probability distribution over the variables may be expressed as a Bayesian Network where the DAG only has the possibility of an edge $x \rightarrow y$ if and only if $x$ belongs to a class of strictly lower order than class $y$. This is an {\em ordered block model}.  \citet{Kemp2004} introduce a prior, which we call the {\em Hoppe-Beta Prior}, which is a joint distribution over classifications and graphs. In our case, we make the further assumption that this partition of the variables corresponds to the minimal layering of the DAG of the Bayesian Network.

\paragraph{The Hoppe-Beta Prior}  We describe the {\em Hoppe-Beta} prior; this is the name we give to the prior introduced by \citet{Kemp2004}. The description also explains how to sample from the distribution. Each step is computationally straightforward and uses natural tools. The distribution has three parameters, which may be used to  control  the sparsity and {\em consistency} of the sampled graphs; consistency will be defined later.  The algorithm can be divided into two main steps: 

\paragraph{Step 1} In the first step the nodes are partitioned into numbered classes using the  Hoppe-Ewens urn scheme \citet{hoppe2009}(1984). The prior over classes is therefore constructed first, without reference to the prior over DAGs.

\paragraph{Step 2} The second step is the prior over DAGs conditioned on the classification.  Firstly, the edge probability from a class $a$ node to a class $b$ node is generated using a suitable Beta distribution, these are independent for different pairs of classes and only edges from lower to higher classes are permitted. A DAG is then generated using these probabilities; conditioned on the random variables generated by the Beta distributions, the indicator variables for edges between pairs are mutually independent. 

\paragraph{The Minimal Hoppe-Beta Prior} We introduce a new prior, over graph structures, which we call the {\em Minimal Hoppe-Beta Prior}. As with the Hoppe-Beta prior, we first generate a class vector, according to the Hoppe-Ewens urn scheme. We then ensure that this structure provides the {\em minimal layering} for a DAG. That is, a node in class $C_i$ has at least one parent in class $C_{i-1}$. We first produce a skeleton; for each node in a class $C_i$, choose one node at random (each with equal probability) from class $C_{i-1}$ and add an arrow. We call these {\em compelled edges}. This skeleton graph ensures that the classification from the Hoppe-Ewens scheme provides a minimal layering. Then we decide on the remaining edges using the same scheme as \citeauthor{Kemp2004} with the Hoppe-Beta prior. 

We are able to obtain a convenient closed form expression for our prior. It has the advantage that it is a prior only over graphs. The graph structure implies a class vector, which is the minimal class vector. If one wishes to infer class structure from a DAG, then the minimal layering represents as much of the  class structure that can be inferred from data alone. 

There are other possible choices of class structure other than {\em minimal} layering. The point is that  data influences the class structure only through the update on the graph structure. 

\paragraph{The Posterior Distribution}  The nodes of the graph are random variables $(X_1, \ldots, X_d)$. Given an $n \times d$ data matrix ${\bf x}$ of $n$ instantiations, the Cooper-Herskovits likelihood is used to give a likelihood function for the graph structure given data. Given the data matrix, this likelihood only depends on the graph structure; the data influences classification only through the graph structure. This is true both for the Hoppe-Beta prior joint distribution over class /graph structure (where the distribution over classes given the graph does not change with data) and for the Minimal Hoppe-Beta prior over graph structure (where data updates the distribution over graphs, and the minimal layering is chosen for the class structure).  

\paragraph{Monte Carlo Methods for the Posterior} A Gibbs sampler is considered for the posterior distribution, but this turns out to be rather slow and wrongly classified nodes have difficulties changing class. To find the Maximum Aposteriori Structure (MAPS), a stochastic optimisation algorithm is used. The moves are based on the Hoppe-Ewens urn scheme for moving between classes, along with some addition / deletion of edges. A proposed move $x \mapsto y$ is accepted with probability $\min\left (1, \frac{\mathbb{S}(y)}{\mathbb{S}(x)} \right )$ where $\mathbb{S}$ is the score function.   

We compare inference from the posterior for three prior distributions: the uniform prior over graph structures, the Hoppe-Beta distribution where we consider the graph structure and the Minimal Hoppe-Beta prior.

\section{Block Structured Priors}
We now describe the   Hoppe-Beta  distribution over classification and graph structures.

Let $X = (X_1, \ldots, X_d)$ denote $d$ nodes of a graph. The indexing set is $(1, \ldots, d)$. Each node belongs to a {\em class}, where a priori the assignment of nodes to classes is unknown and the number of classes is also unknown. Let $z=(z_1,z_2,\dots,z_d)$ be the {\em class assignment vector}; , where $z_i=j$ means that variable $i$ is of class $j$; the classes are labelled by the positive integers.

For the {\em ordered block model}, a DAG represents direct influences between the nodes. The nodes are also classified and all the arrows of the DAG are {\em from} nodes of a lower class {\em to} nodes of a higher class. For the prior distribution, firstly a classification vector is generated  via a Hoppe-Ewens urn scheme and then a DAG is generated based on the class structure. The classes represent a hierarchical structure. Therefore, the classification vectors $z^{(1)} = (1,2,2)=\{\{1\}_1,\{ 2,3\}_2\}$ and $z^{(2)}=(2,1,1)=\{\{ 2,3\}_1,\{1\}_2\}$ give {\em different} classification structures; for the DAG, directed arrows will go {\em from} variables of a class of lower index {\em to} variables of a class of higher index.
We shall call the sets in a partition cells, layers, classes or colours.
We denote a permutation of $d$ elements in the Cauchy 2-line notation.
E.g. the permutation $\rho$ defined by $\rho(1)=2,\,\rho(2)=3,\,\rho(3)=4,\,\rho(4)=1,$ is denoted by $\bigl(\begin{smallmatrix}
  1 & 2 & 3 & 4  \\
  2 & 3 & 4 & 1
\end{smallmatrix}\bigr)$.

\subsection{Prior Distribution over Node Classification}  
 Let $z =(z_1, \ldots, z_{d})$ denote a class assignment vector, generated   by a Hoppe urn model. The nodes $\{1, \ldots, d\}$ are introduced one by one, in order lowest to highest.  An urn initially has an orange ball of weight $\alpha > 0$. 

The size of $\alpha$ will influence the number of classes; the smaller $\alpha$ the fewer classes.  If $\alpha = 0$, then all the variables will be in a single class and hence the resulting DAG will be the empty DAG. 

Each ball added to the urn has unit weight. At the $n$th selection, we  draw  a ball at  random, in proportion to its weight, from the urn.  
If we draw the   orange   ball,  then we put it back,  together with an additional  ball of a colour that has not yet present in the urn. This new colour is the `value' of $z_n$. The colours are numbered according to the order in which they were introduced to the urn.   If we do not pick an orange ball, we  put the  ball back, together   with another ball of the same colour and, in this case, this is the value of  $z_n$.   

The orange ball takes the label $0$; note that none of the items introduced are placed in class $0$. 
  
Firstly, node $1$ is assigned to class $1$ (a colour different from orange). 

Assume that nodes $1, \ldots, j$ have been assigned to classes and that there are now a total of  $k_j$ colours different from orange.    
  For $(z_1,\dots, z_d)\in \{1,\dots,d\}^d$ we set  
  \begin{align}
    m_i^{(j)}(z_1,\dots, z_d) = \sum^j_{l=1}\mathbf 1_i(z_l),\,i=1,\dots,d,  \nonumber
  \end{align}
  where 
  \begin{align}
    \mathbf 1_i(x) = \left\{ 
      \begin{array}{l l}
        1, & x=i \\
        0, & x\ne i.
      \end{array} \right. \nonumber 
  \end{align}
  Set 
  \begin{align} 
    m_i^{z;j} \stackrel{\rm def}{=}  m_i^{(j)}(z_1,\dots, z_d).
  \end{align} 
  Thus  $m_i^{z;j} $  counts  the number of nodes from $\{1,\dots, j\}$ in cell $i$, so that $\sum_{i=1}^{k_j}m_i^{z;j} =j$.  

\paragraph{Generation} The classification vector $(z_1, \ldots, z_d)$ is considered as the outcome of a random vector $(Z_1, \ldots, Z_d)$. The algorithm begins with $z_1 = 1$ and proceeds as follows:  For $j\in \{1,\dots,d-1\}$,  

\begin{equation} \left\{ 
\begin{array}{ll}
\mathbb{P}_{Z_{j+1} |Z_1, \ldots, Z_j}(k_j+1 |z_1, \ldots, z_j) = \frac{\alpha}{\alpha+j} & \\
      \mathbb{P}_{Z_{j+1} |Z_1, \ldots, Z_j }(i |z_1, \ldots, z_j ) = \frac{m_i^{z:j}}{\alpha+j} & i=1,\dots, k_j.
      \end{array} \right. 
    \label{eq:crp_prob}
  \end{equation}

\noindent Let
\[ K:= \max_{j \in \{1, \ldots, d\}} z_j.\]

\noindent so that $K$ is the total number of classes. 
\subsubsection*{Expected number of cells}
 
Let $K_d$ be the number of cells generated by Hoppe's urn scheme  with $d$ balls. Let $I_i$ to be the  indicator function of the event that a new class is created in round $i$, $i=1,\dots, d$.
 Then $\mathbb{P}(I_i=1)=\mathbb{E}[I_i]=\frac{\alpha}{\alpha + i -1}$ for $i=1,\dots, N$. The expected number of classes is:
 
\begin{equation}
\label{eq:exp_cells}
\mathbb{E}[K_d] = \mathbb{E} [\sum_{i=1}^{d}I_i] = \frac{\alpha}{\alpha} + \frac{\alpha}{\alpha+1} +\dots + \frac{\alpha}{\alpha+d-1} .
\end{equation}

\noindent Note that 

\begin{equation}
\frac{\alpha}{\alpha} + \frac{\alpha}{\alpha+1} +\dots + \frac{\alpha}{\alpha+d-1} = \alpha\mathcal H_d (\alpha)
\end{equation}

\noindent where $\mathcal H_d$ is the generalised harmonic number.
It is straightforward to compute that for any fixed $\alpha$, $\mathcal H_d(\alpha) \sim \ln(d)$ as $d \to \infty$.
Thus $$\lim_{d \rightarrow +\infty}\frac{\mathbb{E}[K_d]}{\ln (d)} = \alpha.$$  

\subsection{Prior over Graph Structure, Given Class Structure} 
 
For {\em unlabelled} classes, the distribution of $Z$ is {\em exchangeable}. The same effect is achieved by randomising the order of the classes. Let $R$ be the space of possible class permutations; if there are $K$ classes, then $\rho$ is a permutation of $\{1,\dots, K\}$. The conditional distribution of $\rho$, conditioned on $Z$ is:

\begin{equation}
\mathbb{P}_{R|Z}(\rho|z)= \frac{1}{K!},\qquad \rho \in R.
    \label{eq:cluster_perm_prob}
  \end{equation}
 When constructing the DAG, we only permit edges from nodes in a class $a$ to nodes in a class $b$ if $\rho(a) < \rho(b)$. Conditioned on the classification vector $z$ and the class ordering $\rho$, edges are mutually independent of each other. Firstly, we randomly generate the edge probability and then, for each pair, the existence of an edge is the outcome of a Bernoulli random variable.

For $(a,b) \in \{1, \ldots, K\}^2$, define the density  $f_{a,b}$  as follows: 

\begin{equation} 
 \left\{ \begin{array}{ll}   f_{a,b}(x) = 
    \left\{ \begin{array}{ll}
      \frac{1}{B(\beta_{a,b;1},\beta_{a,b;2})}x^{\beta_{a,b;1}-1}(1-x)^{\beta_{a,b;2}-1}, & 0 \le x \le 1\\
      0, & {\rm otherwise}. 
    \end{array}\right. & a < b \\
    \delta_0(x) & a  \geq b
    \label{eq:edge_prob}
  \end{array} \right. \end{equation}
   
\noindent where for each $a < b$, $\beta_{a,b;1} > 0$ and $\beta_{a,b;2} > 0$. If $\beta_{a,b;1} = 0$ then $f_{a,b} = \delta_0$ and if $\beta_{a,b;2} = 0$ then $f_{a,b} = \delta_1$. Here  $B(\cdot, \cdot)$ is the  {\em beta function} defined by:

\[ B(x,y)=\dfrac{\Gamma(x)\,\Gamma(y)}{\Gamma(x+y)},\] 

\noindent where $\Gamma(\cdot)$  is the Euler gamma function.  
  
  Let $\eta=(\eta_{a,b}:a,b\in \{1,\dots,K\})$ be a random matrix of edges probabilities where $\eta_{a,b}$ are independent random variables and, for each $(a,b) \in \{1, \ldots, K\}^2$, $\eta_{a,b}\sim f_{a,b}$.
Let $\Xi$ denote the $d \times d$ matrix with entries $\xi_{i,j}=\eta_{\rho(z_i),\rho(z_j)}$.

 Let $G$ denote the edge set of the graph. This is the $d \times d$ matrix with  entries

\[ G_{ij} = \left\{ \begin{array}{ll} 1 & \mbox{edge $i \rightarrow j$ present} \\ 0 & \mbox{edge $i \rightarrow j$ not present} \end{array}\right. \]

\noindent Conditioned on the matrix $\Xi$, $G$ is the outcome of a random matrix ${\cal G}$ where the entries of ${\cal G}$ are independent and ${\cal G}_{ij} \sim \mbox{Be}(\xi_{ij})$ (i.e. a Bernoulli trial with success probability $\xi_{ij}$).

Then the (prior)  probability of  the DAG $G$  given a partition $z$, class permutation $\rho$ and the   edge probabilities $\xi$ is found    by:

\begin{equation}
 \mathbb{P}_{{\cal G} |\Xi, Z, R}(G| \xi,  z, \rho) = \mathbb{P}_{{\cal G} |\Xi}( G | \xi) =  \prod_{x=1}^{d}\prod_{y=1}^{d}\xi_{x,y}^{G_{x,y}}(1-\xi_{x,y})^{1-G_{x,y}}.
  \label{eq:d|xi,z}
\end{equation}

\noindent  with the convention $0^0 = 1$. It is clear from the construction  that $d$ is the edge set of a DAG. Note that we take $0^{0}=1$.

 \citet{Mansinghka06structuredpriors} and \citet{Kemp2004} restrict attention to the situation where $\beta_{a,b;1} = \beta_{1}$ and 
$\beta_{a,b;2} = \beta_{2}$ for all $1 \leq a < b$.  The nature of the graph prior is therefore controlled by three parameters, $\alpha$,   $ \beta_{1}$ and 
$\beta_{2}$. 

While leading to computational convenience, removing the dependence of $\beta_1$ and $\beta_2$ on $a$ and $b$ means that edges are equally likely between a node and any other of a higher order. The expected value is $\frac{\beta_1}{\beta_1 + \beta_2}$ and the variance $\frac{\beta_1\beta_2}{(\beta_1 + \beta_2)(1 + \beta_1 + \beta_2)}$. A lower $\frac{\beta_1}{\beta_1 + \beta_2}$ leads to a sparser graph.\vspace{5mm}

\noindent Another situation of interest, which is computationally convenient, but which is only valid when it is known a priori that class $i$ only influences higher order classes via class $i+1$, is the model where

\begin{equation}
    f_{a,b}(x) = \delta_0(x)  \qquad b \neq a+1
    \label{eq:delta}
  \end{equation}

\noindent and $\beta_{a,a+1;1} = \beta_1$, $\beta_{a,a+1;2} = \beta_2$. In this situation, each class has at most one adjacent class; for variables in a given class edges from these variables are only possible to variables in the adjacent class. 

A computationally convenient setting, which gives flexibility, is to consider two setting: $\beta_{i;j,j+1} = \beta_{i,1}$ for $i = 1,2$ and $\beta_{i;j,j+k} = \beta_{i,2}$ for $k \geq j+2$, $i = 1,2$. The parameters are chosen so that the edge probability from $j$ to $j+1$ is higher than $j$ to $j+k$ where $k \geq 2$. 

\begin{Ex}
\label{ex:algrun}
\end{Ex}
\noindent In \autoref{fig:nodes}-\autoref{fig:dag} a possible outcome of the algorithm is shown in 4 steps. Here $d=8$,
\[ z= ( \begin{smallmatrix} 1 & 1 & 2 & 3 & 3 & 3 & 2 & 1  \end{smallmatrix} ), \quad \rho= \left ( \begin{smallmatrix} 1 & 2 & 3 \\ 1&3&2 \end{smallmatrix} \right) .\] 
\autoref{fig:nodes} shows the nodes. 
\autoref{fig:partition} shows the partition of the nodes generated by Hoppe's urn scheme, where the colours indicate the different cells in the partition. Cell 1 is coloured in red, cell 2 is coloured in black and cell 3 is coloured in blue.
\autoref{fig:cell_order}, shows the new order of the cells, $\rho$. 
\autoref{fig:dag} shows a possible graph under these conditions.

\begin{figure}[!ht]
\centering

\begin{subfigure}[b]{0.4\textwidth}
\centering
\begin{tikzpicture}[->,>=stealth',shorten >=1pt,auto,node distance=1.0cm,
      thick,main node/.style={circle,scale=0.7,fill=white!10,draw,font=\sffamily\Large}]
      \node[main node] (1) [] {1};
      \node[main node] (2) [right of=1] {2};
      \node[main node] (3) [right of=2] {3};      
      \node[main node] (4) [right of=3] {4};
      \node[main node] (5) [right of=4] {5};
      \node[main node] (6) [right of=5] {6};      
      \node[main node] (7) [right of=6] {7};
      \node[main node] (8) [right of=7] {8};
    \end{tikzpicture}
    \caption{}
    \label{fig:nodes} 
  \end{subfigure}
  ~

  \begin{subfigure}[b]{0.3\textwidth}
    \centering
    \begin{tikzpicture}[->,>=stealth',shorten >=1pt,auto,node distance=1.5cm,
      thick,main node/.style={circle,scale=0.7,fill=white!10,draw,font=\sffamily\Large}]
      
      \node[main node,draw=red] (1) [] {1};
      \node[main node,draw=red] (2) [right of=1] {2};
      \node[main node,draw=red] (3) [right of=2] {8};
      
       \node[main node,draw=black] (4) [below of=1] {3};
      \node[main node,draw=black] (5) [right of=4] {7};

      \node[main node,draw=blue] (6) [below of=4] {4};
      \node[main node,draw=blue] (7) [right of=6] {5};
      \node[main node,draw=blue] (8) [right of=7] {6};
    \end{tikzpicture}
    \caption{ }
    \label{fig:partition} 
  \end{subfigure}
  ~
  \begin{subfigure}[b]{0.3\textwidth}
    \centering
    \begin{tikzpicture}[->,>=stealth',shorten >=1pt,auto,node distance=1.5cm,
      thick,main node/.style={circle,scale=0.7,fill=white!10,draw,font=\sffamily\Large}]
      \node[main node,draw=red] (1) [] {1};
      \node[main node,draw=red] (2) [right of=1] {2};
      \node[main node,draw=red] (3) [right of=2] {8};
      
      \node[main node,draw=blue] (4) [below of=1] {4};
      \node[main node,draw=blue] (5) [right of=4] {5};
      \node[main node,draw=blue] (6) [right of=5] {6};
      
      \node[main node,draw=black] (7) [below of=4] {3};
      \node[main node,draw=black] (8) [right of=7] {7};
    \end{tikzpicture}
    \caption{}
    \label{fig:cell_order}
  \end{subfigure}
  ~
  \begin{subfigure}[b]{0.3\textwidth}
    \centering
    \begin{tikzpicture}[->,>=stealth',shorten >=1pt,auto,node distance=1.5cm,
      thick,main node/.style={circle,scale=0.7,fill=white!10,draw,font=\sffamily\Large}]
      \node[main node,draw=red] (1) [] {1};
      \node[main node,draw=red] (2) [right of=1] {2};
      \node[main node,draw=red] (3) [right of=2] {8};
      
      \node[main node,draw=blue] (4) [below of=1] {4};
      \node[main node,draw=blue] (5) [right of=4] {5};
      \node[main node,draw=blue] (6) [right of=5] {6};
      
      \node[main node,draw=black] (7) [below of=4] {3};
      \node[main node,draw=black] (8) [right of=7] {7};
      \path[every node/.style={font=\sffamily\small}]
      (1) edge node [] {} (4)
      edge node [] {} (5)        
      (2) edge node [] {} (6)
      edge node [] {} (7)
      (4) edge node [] {} (8);
    \end{tikzpicture}
    \caption{}
    \label{fig:dag}    
  \end{subfigure}

\caption{Figures a)-d) show the steps in Example~\ref{ex:algrun}}
\label{fig:alg}  
\end{figure}
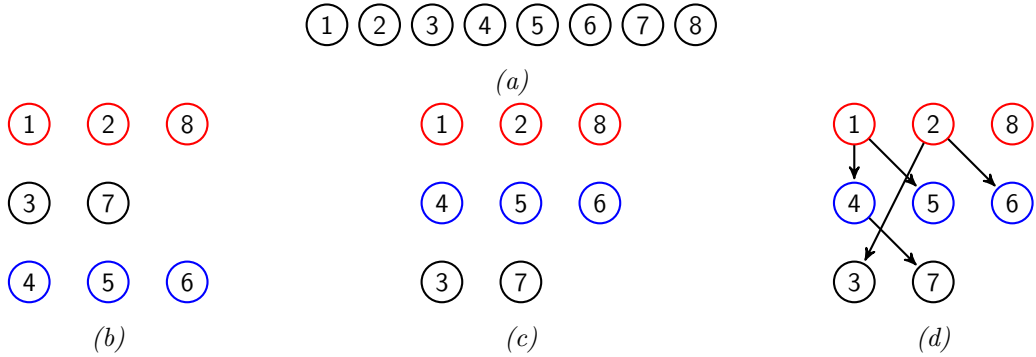

\subsection{Joint distribution of the DAG and the  partition}
The method for generating DAGs described above enables us to derive the joint distribution of the classification / DAG $(Z, {\cal G})$, where $Z$ is the random classification vector and ${\cal G}$ the random DAG.

\subsubsection{Probability of the Partition}
The probability function of a partition  $Z$ is given by the following well known theorem found in \citep{hoppe2009}. 
We need to define the space of possible $z$ vectors:
\begin{align}
  \label{eq:feasible_partitions}
  \mathcal S_d = \{z \in \{1,\dots ,d\}^d | z_1=1, 1 \le z_j \le \max_{1\le k \le j-1} z_k +1 : 2\le j \le d\}.
\end{align}
We set 
\begin{align} 
m_k  \stackrel{\rm def}{=} m_{k}^{z:d},  
\end{align}
which   is the occupation number for cell $k$ after all $d$ nodes have  been assigned to cells. 
Then we have the following result.

\begin{Th} \label{hoppethm2}
Let $K$ be the number of non-empty cells and

\begin{equation}
  \label{eqfeasiblepartitions}
  \mathcal S_d = \left \{z \in \{1,\dots ,d\}^d | z_1=1, 1 \le z_j \le \max_{1\le k \le j-1} z_k +1 : 2\le j \le d \right \}.
\end{equation}

\noindent that is $\mathcal S_d$ is the set of possible values of  $z$. 
The distribution of the random vector $Z$ for a given parameter value $\alpha \in \mathbb{R}_+$ may be computed explicitly and is given by 

\begin{equation}
\mathbb{P}_{Z }(z| \alpha) =  
\begin{cases}
\alpha^K  \frac{\Gamma(\alpha)}{\Gamma (d+\alpha)} \prod_{k=1}^K(m_k-1)!, & K=1,\dots ,d,  z\in \mathcal S_d\\
0, & \mbox{otherwise}. \\
\end{cases} 
\end{equation}
\end{Th}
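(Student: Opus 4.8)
The plan is to compute $\mathbb{P}_Z(z\mid\alpha)$ directly from the generative description by the chain rule, expressing it as a telescoping product of the one-step transition probabilities in \eqref{eq:crp_prob} and then regrouping the factors cell by cell. Node $1$ is placed in class $1$ deterministically, and for every $n\in\{1,\dots,d\}$ the urn contains $n-1$ coloured balls together with the orange ball of weight $\alpha$, so the total weight at the $n$-th draw is $\alpha+n-1$. Writing $\nu_n$ for the numerator of the transition probability at the $n$-th draw, we obtain
\[
\mathbb{P}_Z(z\mid\alpha)=\prod_{n=1}^{d}\frac{\nu_n}{\alpha+n-1},
\]
and the product of the denominators telescopes to $\prod_{n=1}^d(\alpha+n-1)=\Gamma(\alpha+d)/\Gamma(\alpha)$, which accounts for the factor $\Gamma(\alpha)/\Gamma(d+\alpha)$ in the statement.

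The substance of the argument is the bookkeeping of the numerators $\nu_n$. Each draw is of exactly one of two types. A draw that introduces a previously unused colour contributes $\nu_n=\alpha$; by construction this occurs exactly once for each of the $K$ non-empty cells, giving $\alpha^K$. A draw that joins an existing cell $k$ contributes $\nu_n$ equal to the occupation number of $k$ immediately before that draw. The key observation is that, restricting to the subsequence of draws landing in a fixed cell $k$, the successive numerators are $1,2,\dots,m_k-1$: the creating draw is the $\alpha$-draw, and the $\ell$-th later arrival into $k$ sees exactly $\ell$ balls already present. Hence the numerators from cell $k$ multiply to $(m_k-1)!$, and this is independent of how the draws into different cells are interleaved, since the numerator seen by an arrival into $k$ depends only on the number of earlier arrivals into $k$, not on the global step index. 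Multiplying over all cells yields $\alpha^K\prod_{k=1}^K(m_k-1)!$ in the numerator, and combining with the denominator gives the stated formula.

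For the support, I would note that the scheme can only produce vectors in $\mathcal S_d$: one has $z_1=1$, and since a new colour always receives the next integer label, $z_j\le \max_{1\le k\le j-1}z_k+1$ for $2\le j\le d$. Consequently $\mathbb{P}_Z$ vanishes off $\mathcal S_d$, giving the second branch of the case distinction.

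I expect the only genuine subtlety to be making the ``interleaving is irrelevant'' step precise, i.e.\ showing that the numerator product factorises across cells no matter in what order the cells are populated. This is a purely combinatorial point, which can be settled by indexing each coloured draw by the pair (its cell, its rank within that cell); this reindexing is a bijection onto $\{(k,\ell):1\le k\le K,\ 1\le \ell\le m_k\}$, so no factor is lost or double-counted and the regrouping above is justified.
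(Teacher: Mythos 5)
Your proof is correct and follows essentially the same route as the paper's: both factor $\mathbb{P}_Z(z\mid\alpha)$ via the chain rule using Equation~\eqref{eq:crp_prob}, group the numerators cell by cell to obtain $\alpha^K\prod_{k=1}^K(m_k-1)!$, and identify the denominator $\prod_{j=0}^{d-1}(\alpha+j)$ with $\Gamma(d+\alpha)/\Gamma(\alpha)$. Your explicit justification of the cell-wise regrouping and of the support $\mathcal S_d$ is slightly more detailed than the paper's, but the argument is the same.
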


\begin{proof}
Consider the probability of a sequence   $z \in \mathcal S_d$ (so that $z_{1}=1$) with  occupation numbers $m_{1}, \ldots, m_K $; the number of classes is  $K$.  The probability factorises as:

\[\mathbb{P}_{\alpha; Z }(  z ) =  \prod_{j=1}^{d-1}\mathbb{P}_{\alpha,Z_{j+1} | Z_{1},\dots, Z_{j}}(z_{j}  | 1,z_{2},\dots, z_{j} ).  \]

\noindent   From Equation~\eqref{eq:crp_prob}, it follows that

\[ \mathbb{P}_{\alpha;Z}(z) = \frac{\alpha^K \prod_{i=1}^K \prod_{l=1}^{m_i-1}l}{\prod_{j=0}^{d-1} (\alpha + j)} = \frac{\alpha^K \prod_{i=1}^K (m_i - 1)!}{\prod_{j=0}^{d-1} (\alpha + j)} \]

\noindent where  $\prod_{l=1}^{m_i-1} l = 1$ if $m_i = 1$. Finally using $\Gamma(\beta + 1) = \beta \Gamma(\beta)$, it follows that $\Gamma(\alpha + d) = \Gamma(\alpha) \prod_{j=0}^{d-1} (\alpha + j)$ from which the result follows.
\end{proof}

\subsubsection{The Marginal Prior Probability of the DAG given the  Partition}

Marginalising over $\Xi$ in $ \mathbb{P}_{{\cal G},\Xi|R,Z} = \pi_{\Xi|  Z,R}\mathbb{P}_{{\cal G}|\Xi,  Z}$ gives the following:

\begin{Th}
\begin{equation}
  \mathbb{P}_{{\cal G}|R, Z}(G|\rho,  z) = 
  \prod_{1\le \rho(a)< \rho(b)\le K} \frac{ B(\beta_{1;\rho(a),\rho(b)}+N_{a,b},\beta_{2;\rho(a),\rho(b)} +M_{a,b} )}{B(\beta_{1;\rho(a),\rho(b)},\beta_{2;\rho(a),\rho(b)} )},
  \label{eqpdrz}
\end{equation}

\noindent where $N_{a,b}$ is the number of edges between cell $a$ and cell $b$ and $M_{a,b}$ is the corresponding number of missing edges.
\end{Th}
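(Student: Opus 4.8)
The plan is to integrate out the edge-probability matrix $\Xi$, exploiting the fact that in the block-model construction every edge joining a node of one class to a node of another class carries a single, common Beta-distributed probability, so that a blockwise Beta--Bernoulli conjugacy computation applies. First I would write the marginal as
\[
\mathbb{P}_{{\cal G}|R,Z}(G|\rho,z) = \int \pi_{\Xi|Z,R}(\xi|z,\rho)\,\mathbb{P}_{{\cal G}|\Xi}(G|\xi)\,d\xi,
\]
and recall from the construction that $\xi_{i,j} = \eta_{\rho(z_i),\rho(z_j)}$, where the $\eta_{a,b}$ are mutually independent with $\eta_{a,b}\sim f_{a,b}$. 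Hence the integral over the deterministic function $\Xi$ of $\eta$ reduces to an integral over the independent variables $\{\eta_{a,b}\}_{a,b=1}^K$, which by independence factorises into a product of one-dimensional integrals, one for each ordered pair of (reordered) classes.

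The key step is the regrouping of the node-level double product in Equation~\eqref{eq:d|xi,z}. Since $\xi_{i,j}$ depends on $(i,j)$ only through the pair $(\rho(z_i),\rho(z_j))$, I would collect together all factors coming from pairs of nodes $(i,j)$ with $z_i=a$ and $z_j=b$. Because every such pair carries the same probability $\eta_{\rho(a),\rho(b)}$, these factors combine into
\[
\eta_{\rho(a),\rho(b)}^{\,N_{a,b}}\bigl(1-\eta_{\rho(a),\rho(b)}\bigr)^{M_{a,b}},
\]
where $N_{a,b}$ counts the present edges and $M_{a,b}$ the missing edges between class $a$ and class $b$ (so $N_{a,b}+M_{a,b}=m_a m_b$).

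Next I would evaluate each one-dimensional integral according to the law of the corresponding $\eta$. For pairs with $\rho(a)\ge\rho(b)$ the law is $\delta_0$; since any $G$ in the support of the model has no edges violating the class ordering, $N_{a,b}=0$ there, and integrating $\eta^{N_{a,b}}(1-\eta)^{M_{a,b}}$ against $\delta_0$ contributes $0^0\cdot 1 = 1$ under the stated convention $0^0=1$. For pairs with $\rho(a)<\rho(b)$ the law is $\mathrm{Beta}(\beta_{1;\rho(a),\rho(b)},\beta_{2;\rho(a),\rho(b)})$, and writing $\beta_1,\beta_2$ for these parameters the integral
\[
\frac{1}{B(\beta_1,\beta_2)}\int_0^1 \eta^{\beta_1+N_{a,b}-1}(1-\eta)^{\beta_2+M_{a,b}-1}\,d\eta = \frac{B(\beta_1+N_{a,b},\beta_2+M_{a,b})}{B(\beta_1,\beta_2)}
\]
follows by recognising the unnormalised Beta density. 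Taking the product over all admissible pairs, i.e. those with $1\le\rho(a)<\rho(b)\le K$, then yields Equation~\eqref{eqpdrz}.

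I expect the only real care needed is the bookkeeping in the second step: correctly translating the node-indexed double product of Bernoulli factors into a product indexed by pairs of classes, and verifying that the forbidden ($\rho(a)\ge\rho(b)$) pairs contribute trivially. The conjugacy evaluation itself is entirely standard, so the substance of the argument is the blockwise collapse of the edge factors, which is what makes the closed form available.
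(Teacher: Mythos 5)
Your proposal is correct and follows essentially the same route as the paper's own proof: marginalise $\Xi$ out of $\mathbb{P}_{{\cal G},\Xi|R,Z}$, collapse the node-indexed double product of Bernoulli factors into blockwise factors $\eta_{\rho(a),\rho(b)}^{N_{a,b}}(1-\eta_{\rho(a),\rho(b)})^{M_{a,b}}$, and evaluate each one-dimensional integral by Beta--Bernoulli conjugacy. If anything, you are slightly more explicit than the paper about why the pairs with $\rho(a)\ge\rho(b)$ (and the diagonal) contribute a trivial factor under the $\delta_0$ law and the $0^0=1$ convention, which the paper passes over silently when it restricts the product to $1\le\rho(a)<\rho(b)\le K$.
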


\begin{proof}
The details of the computation  are  given below:

\begin{eqnarray*}
\lefteqn{ \mathbb{P}_{{\cal G}|R, Z}(G|\rho, z)}\\&& =  \int_{\xi}\mathbb{P}_{{\cal G}|\Xi,  Z}(G|\xi, z) \pi_{\Xi|R, Z}(\xi|\rho, z)d\xi \nonumber\\&& 
  =   \int_{\xi}\prod_{x=1}^{N}\prod_{y=1}^{N}\xi_{x,y}^{d_{x,y}}(1-\xi_{x,y})^{1-G_{x,y}} \pi_{\Xi|R, Z}(\xi|\rho, z)d \xi \nonumber\\ 
 && =  \prod_{1\leq \rho(a) < \rho(b) \leq K}\int_0^1\eta_{\rho(a),\rho(b)}^{n_{a,b}}(1-\eta_{\rho(a),\rho(b)})^{M_{a,b}}  f_{\rho(a),\rho(b)}(\eta_{\rho(a),\rho(b)}) d\eta_{\rho(a),\rho(b)} \nonumber\\ 
  && = \prod_{1\leq \rho(a) < \rho(b) \leq K}\int_0^1\eta_{\rho(a),\rho(b)}^{N_{a,b}}(1-\eta_{\rho(a),\rho(b)})^{M_{a,b}}  \frac{\eta_{\rho(a),\rho(b)}^{\beta_{1:\rho(a),\rho(b)}-1}(1-\eta_{\rho(a),\rho(b)})^{\beta_{2:\rho(a),\rho(b)}-1}}{B(\beta_{1:\rho(a),\rho(b)},\beta_{2:\rho(a),\rho(b)} )} d\eta_{\rho(a),\rho(b)} \nonumber\\ 
  && =  \prod_{1\leq \rho(a) < \rho(b) \leq K}\int_0^1 \frac{\eta_{\rho(a) ,\rho(b)}^{\beta_{1:\rho(a),\rho(b)}-1 +N_{a,b}}(1-\eta_{\rho(a),\rho(b)})^{\beta_{2:\rho(a),\rho(b)} -1+M_{a,b}}}{B(\beta_{1;\rho(a),\rho(b)},\beta_{2;\rho(a),\rho(b)} )}   d\eta_{\rho(a),\rho(b)} \nonumber\\ 
  && =  \prod_{1\le \rho(a) <\rho(b)\le K}\frac{B(\beta_{1;\rho(a),\rho(b)} +N_{a,b},\beta_{2;\rho(a),\rho(b)} + M_{a,b} )}{B(\beta_{1;\rho(a),\rho(b)},\beta_{2;\rho(a),\rho(b)} )}, \nonumber
\end{eqnarray*}

\noindent where, $N_{a,b}$ is the number of existing edges between class $a$ and class $b$ (provided $\rho(a) < \rho(b)$)   and $M_{a,b} = m_am_b - N_{a,b}$ is the number of missing edges. $m_a$ and $m_b$ denote the numbers of nodes in classes $a$ and $b$ respectively. 
Recall that $\xi_{i,j}=\eta_{\rho(z_i),\rho(z_j)}$.
\end{proof}

\noindent At this point, the {\em Minimal Hoppe-Beta Prior} (which we derive later) has a distinct advantage over the Hoppe-Beta prior of Mansingkha. To marginalise over $\rho$, some additional assumptions are needed. If we set $\beta_{i} = \beta_{i;a,b}$ for all $a < b$, $i = 1,2$, the right hand side of Equation~\eqref{eqpdrz} is the same for any valid partition of the cells for which the DAG is possible. The conditional marginal probability of a DAG $G$ given a partition $z$ is therefore given by 

\begin{equation}
  \mathbb{P}_{{\cal G}|Z}(G| z) = \frac{g(G, z)}{K!}\mathbb{P}_{{\cal G}|R,Z}(G|\rho_1, z),
  \label{eq:P(D|Z)}
\end{equation}

\noindent where $g(G, z)$ is the number of permutations of the cells that are compatible with  the DAG $G$ and $\rho_1$ is one such permutation. This is self evident: 

\begin{eqnarray*}
\mathbb{P}_{{\cal G}|Z}(G|  z) &= & \sum_{\rho \in R}\mathbb{P}_{{\cal G}|R,Z}(G|\rho, z)\mathbb{P}_{R|{ Z}}(\rho| z) \\
&  = & \sum_{i=1} ^{K!}\mathbb{P}_{{\cal G}|R,{ Z}}(G|\rho_i, z)\frac{1}{K!}  = \frac{g(G,z)}{K!}\mathbb{P}_{{\cal G}|R,Z}(G|\rho_1,  z).\nonumber
\end{eqnarray*}
 
\noindent  The quantity $g(G,z)$ is the number of topological orderings of the nodes in $\widehat G$ and is discussed, along with an algorithm for computing it,  in \citet{Wing-Ning}.\vspace{5mm}

\noindent Now, with slight abuse of notation, let $N_{a,b} = N_{\rho(a),\rho(b)}$ and $M_{a,b} = M_{\rho(a),\rho(b)}$; i.e. $N$ and $M$ denote the numbers of included and missing edges between the classes after they have been listed in order $\rho(1), \ldots, \rho(K)$. Putting these marginalizations together, it follows that for    $\beta_{1:a,b} = \beta_1$ and $\beta_{2:a,b} = \beta_2$, 

\begin{equation}
  \mathbb{P}_{{\cal G}, Z}(G, z) =   \frac{g(G, z)}{K!} \prod_{1\le a < b \le K } \frac{B(\beta_1+N_{a,b},\beta_2 +M_{a,b} )}{B(\beta_1,\beta_2 )}    \alpha^K  \frac{\Gamma(\alpha)}{\Gamma (d+\alpha)} \prod_{k=1}^K(m_k-1)! 
  \label{eqdzjoint}
\end{equation}

\noindent where $g(G,z)$ is the number of orderings $\rho$ of the classes $1, \ldots, K$ that are compatible with the DAG $G$. 

\paragraph{Note} Consider the DAG in Figure~\ref{figdag1}. Suppose the partition is $C_1 = \{1\}$, $C_2 = \{2\}$, $C_3 =\{3 \}$. If $\beta_{i;a,b} = \beta_i$: $i = 1,2$ for all $a < b$, then the edge probability $1 \mapsto 3$ is the same whether the classes appear in the order $C_1,C_2,C_3$ or $C_2, C_1, C_3$. If $\beta_{i:1,3} \neq \beta_{i:2,3}$ then the order makes a difference. 
\section{The Minimal Hoppe-Beta Prior over Graph Structures} We now present a new prior distribution over graph structures, which we call the {\em Minimal Hoppe-Beta Prior}  This is based on the Hoppe-Beta prior of \citeauthor{Kemp2004} and \citeauthor{Mansinghka06structuredpriors}, but has some features that are more convenient.	

\subsection{Hierarchical Graph Drawings} A {\em hierarchical graph drawing} or {\em layering} of a DAG ${\cal G} = (V, D)$ is a partition of the nodes in numbered layers such that all edges are directed {\em from} nodes in layers of lower rank {\em to} nodes in layers of higher rank,  c.f.~\citet{graph_drawings}. This is  a style  of graph drawing  for visual understanding of hierarchical relations. There are several algorithms  for achieving   this,   as surveyed in  \citet{Tamassia2008}.        

The layering of a DAG is not necessarily unique. Consider the DAG in Figure~\ref{figdag1}. There are 3 possibilities for layering this DAG:

 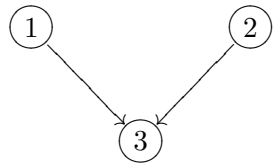
\begin{figure}[ht]
\begin{center}
 \[ \UseTips \xymatrix{  *++[o][F]{1} \ar [dr]  & & *++[o][F]{2} \ar[dl] \\ &   *++[o][F]{3} & }   \]
\caption{DAG with three nodes}
\label{figdag1}
\end{center}
\end{figure}

\begin{itemize}
 \item Layer 1: nodes 1 and 2, Layer 2: node 3.
 \item Layer 1: node 1, Layer 2: node 2, Layer 3: node 3.
 \item Layer 1: node 2, Layer 2: node 1, Layer 3: node 3.
\end{itemize}

\noindent While node 3 is always in the highest layer, there is some ambiguity with nodes 1 and 2. They can be either in separate layers or in the same layer. 

When inferring classification, the DAG structure alone cannot distinguish between these possibilities. For our prior over graph structure, we assume that the layering is {\em minimal}. For the 3-variable DAG of Figure~\ref{figdag1}, the minimal layering is the first of the possibilities listed. 

We call the layering {\em minimal} if 
\begin{itemize} \item  it has the smallest possible number of layers for the DAG under consideration and 
 \item among layerings satisfying this criterion, the nodes are in layers with as low a rank as possible. 
\end{itemize}

\noindent  The {\em minimal} layering represents, in some sense, the  class structure that can be inferred from the DAG. 

\subsection{Outline of the Minimal Hoppe-Beta Prior} 
\label{sec:outline}
\noindent For the Minimal Hoppe-Beta Prior, we use the Hoppe-Ewens urn scheme to generate the classification, just as \citet{Kemp2004}. The difference is that this classification corresponds to the {\em minimal layering} of a DAG and we choose a minimal number of edges at random to form a {\em skeleton}; namely a DAG with a minimal number of edges to ensure that the classification is a minimal classification for the DAG. We then add in additional edges randomly according to the   approach of Kemp et. al.

\begin{itemize}
 \item {\bf Step 1} For nodes $1, \ldots, d$, generate a class assignment vector according to the Hoppe-Ewens urn scheme. Using $m_i^{(j)} = \sum_{k=1}^j {\bf 1}_i(z_k)$ and let $K_j = \max\{i : m_i^{(j)} \neq 0\}$
 \[ \mathbb{P}(Z_j = i|Z_1, \ldots, Z_{j-1}) = \left\{ \begin{array}{ll} \frac{m_i^{(j-1)}}{j-1+\alpha} & i = 1, \ldots, K_{j-1} \\ \frac{\alpha}{j-1+\alpha} & i = K_{j-1} + 1. \end{array}\right. \]
 \item {\bf Step 2} Let $K = K_d$. This is the total number of classes. Let $\rho$ be a randomly chosen permutation of $(1, \ldots, K)$; conditioned on $K$ classes, each $\rho$ is chosen with probability $\frac{1}{K!}$. 
 The permutation represents the {\em ordering} of the layers, from lowest to highest.
 \item {\bf Step 3} Let $K$ be the number of classes. For each $j = 2, \ldots, K$, for each node $v \in C_{\rho(j)}$ (where $C_i$ denotes class $i$) choose a node $w$ randomly from those in class $C_{\rho(j-1)}$ (each with equal probability). Add in the directed edge $w \rightarrow v$.

 After stage 3, a skeleton graph has been produced. This is a graph whose minimal layering corresponds to the classification generated by $z$ and $\rho$ and a graph with the minimal number of edges necessary to have this property.
 
 \item {\bf Step 4} The remaining edges which are not in the skeleton are added according to the scheme outlined by Kemp; for $\rho(i) < \rho(j)$, an edge probability $\xi_{ij}$ is generated according to a Beta $B(\beta_{1:\rho(i),\rho(j)},\beta_{2:\rho(i),\rho(j)})$ distribution. This is the edge probability between class $i$ nodes and class $j$ nodes when $\rho$ is the class permutation. The random variables $\xi_{i_1,j_1}$ and $\xi_{i_2,j_2}$ are independent for $(i_1,j_1) \neq (i_2, j_2)$, $\rho(i_1) < \rho(j_1)$ and $\rho(i_2) < \rho(j_2)$.  
\end{itemize}

\begin{Ex}
\label{ex:algrun2}
\end{Ex}
\noindent In \autoref{fig:partition2}-\autoref{fig:dag2} a possible outcome of the algorithm above is shown. 
The setup is similar to that in Example \ref{ex:algrun} with $d=8$,
\[ z= ( \begin{smallmatrix} 1 & 1 & 2 & 3 & 3 & 3 & 2 & 1  \end{smallmatrix} ), \quad \rho= \left ( \begin{smallmatrix} 1 & 2 & 3 \\ 1&3&2 \end{smallmatrix} \right) .\] 
The additional step in this example is step \autoref{fig:skeleton} where the skeleton is created.

\begin{figure}[!ht]
\centering


  \begin{subfigure}[b]{0.20\textwidth}
    \centering
    \begin{tikzpicture}[->,>=stealth',shorten >=1pt,auto,node distance=1.5cm,
      thick,main node/.style={circle,scale=0.7,fill=white!10,draw,font=\sffamily\Large}]
      
      \node[main node,draw=red] (1) [] {1};
      \node[main node,draw=red] (2) [right of=1] {2};
      \node[main node,draw=red] (3) [right of=2] {8};
      
       \node[main node,draw=black] (4) [below of=1] {3};
      \node[main node,draw=black] (5) [right of=4] {7};

      \node[main node,draw=blue] (6) [below of=4] {4};
      \node[main node,draw=blue] (7) [right of=6] {5};
      \node[main node,draw=blue] (8) [right of=7] {6};
    \end{tikzpicture}
    \caption{ }
    \label{fig:partition2} 
  \end{subfigure}
  ~
  \begin{subfigure}[b]{0.20\textwidth}
    \centering
    \begin{tikzpicture}[->,>=stealth',shorten >=1pt,auto,node distance=1.5cm,
      thick,main node/.style={circle,scale=0.7,fill=white!10,draw,font=\sffamily\Large}]
      \node[main node,draw=red] (1) [] {1};
      \node[main node,draw=red] (2) [right of=1] {2};
      \node[main node,draw=red] (3) [right of=2] {8};
      
      \node[main node,draw=blue] (4) [below of=1] {4};
      \node[main node,draw=blue] (5) [right of=4] {5};
      \node[main node,draw=blue] (6) [right of=5] {6};
      
      \node[main node,draw=black] (7) [below of=4] {3};
      \node[main node,draw=black] (8) [right of=7] {7};
    \end{tikzpicture}
    \caption{}
    \label{fig:cell_order2}
  \end{subfigure}
  ~
  \begin{subfigure}[b]{0.20\textwidth}
    \centering
    \begin{tikzpicture}[->,>=stealth',shorten >=1pt,auto,node distance=1.5cm,
      thick,main node/.style={circle,scale=0.7,fill=white!10,draw,font=\sffamily\Large}]
      \node[main node,draw=red] (1) [] {1};
      \node[main node,draw=red] (2) [right of=1] {2};
      \node[main node,draw=red] (3) [right of=2] {8};
      
      \node[main node,draw=blue] (4) [below of=1] {4};
      \node[main node,draw=blue] (5) [right of=4] {5};
      \node[main node,draw=blue] (6) [right of=5] {6};
      
      \node[main node,draw=black] (7) [below of=4] {3};
      \node[main node,draw=black] (8) [right of=7] {7};
      \path[every node/.style={font=\sffamily\small}]
      (1) edge node [] {} (4)
	      edge node [] {} (5)    	    
      (2) edge node [] {} (6)
      (4) edge node [] {} (8)
	      edge node [] {} (7);
    \end{tikzpicture}
    \caption{}
    \label{fig:skeleton}    
  \end{subfigure}
  ~
  \begin{subfigure}[b]{0.20\textwidth}
    \centering
    \begin{tikzpicture}[->,>=stealth',shorten >=1pt,auto,node distance=1.5cm,
      thick,main node/.style={circle,scale=0.7,fill=white!10,draw,font=\sffamily\Large}]
      \node[main node,draw=red] (1) [] {1};
      \node[main node,draw=red] (2) [right of=1] {2};
      \node[main node,draw=red] (3) [right of=2] {8};
      
      \node[main node,draw=blue] (4) [below of=1] {4};
      \node[main node,draw=blue] (5) [right of=4] {5};
      \node[main node,draw=blue] (6) [right of=5] {6};
      
      \node[main node,draw=black] (7) [below of=4] {3};
      \node[main node,draw=black] (8) [right of=7] {7};
      \path[every node/.style={font=\sffamily\small}]
      (1) edge node [] {} (4)
	      edge node [] {} (5)        
      (2) edge node [] {} (6)
    	  edge node [] {} (7)
      (4) edge node [] {} (8)
	      edge node [] {} (7)
      (6) edge node [] {} (8);
    \end{tikzpicture}
    \caption{}
    \label{fig:dag2}    
  \end{subfigure}
\caption{Figures a)-d) show the steps in the minimal Hoppe-Beta prior}
\label{fig:alg2}  
\end{figure}
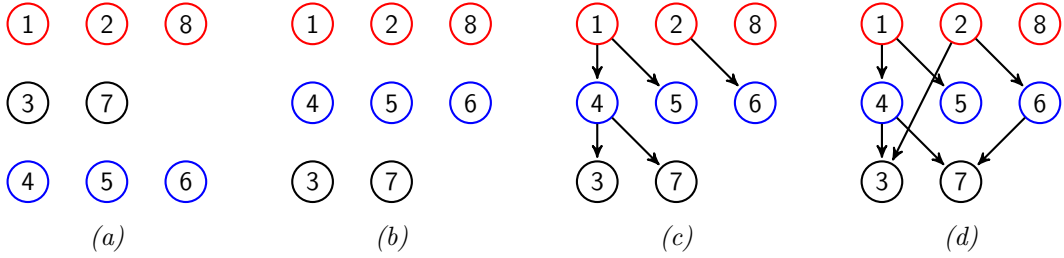
\subsection{A Formula for the Minimal Hoppe-Beta Prior} Let $G$ be a DAG on $d$ nodes whose minimal layering has $K$ layers, with $m_1, \ldots, m_K$ in each, the layers ordered from lowest to highest. 

In our scheme, a DAG implies a single class structure (allocation of objects to classes and the order of the classes). The above generation scheme gives the probability the DAG as:

\begin{eqnarray} \nonumber\mathbb{P}_{{\cal G}}(G) &=& \frac{1}{K!}\frac{\alpha^K \prod_{i=1}^K (m_i - 1)!}{\prod_{j=1}^d (\alpha  + j - 1)}\prod_{j=1}^{K-1} \frac{B(\beta_{1;j,j+1} + N_{j, j+1}- m_{j}, \beta_{2;j,j+1} + M_{j, j+1})}{B(\beta_{1;j,j+1},\beta_{2;j,j+1})}\\&& \times \prod_{j=1}^{K-2}\prod_{i=j+2}^K \frac{B(\beta_{1;j,i} + N_{j,i}, \beta_{2;j,i} + M_{j,i})}{B(\beta_{1;j,i},\beta_{2;j,i})}.\label{eqminhopbet} \end{eqnarray}

\noindent Here $\frac{1}{K!}$ is the probability of permutation $\rho$ of the classes $1, \ldots, K$, $\frac{\alpha^K \prod_{i=1}^K (m_i - 1)!}{\prod_{j=1}^d (\alpha + j - 1)}$ is the probability of the class assignment vector according to the Hoppe-Ewens urn scheme, $N_{a,b}$ denotes the number of edges between class a and class b nodes, while $M_{a,b} = m_am_b - N_{ab}$ denotes the number of missing edges. The first term follows from forcing each node of class $j$ to have at least one parent in class $j-1$; the last term follows because there is no such forcing between different pairs of classes. 

The formula follows from using

\[ \mathbb{P}(\mbox{DAG}) = \sum_{\mbox{skeleton}} \mathbb{P}(\mbox{DAG}|\mbox{skeleton})\mathbb{P}(\mbox{skeleton}).\]

\noindent For a given layering (choice of classification vector $z$ and $\rho$ - ordering of the classes; $m_j$ denotes number in class $\rho(j)$), all skeletons have the same probability $\prod_{j=2}^K \left(\frac{1}{m_{j-1}}\right)^{m_{j}}$ and all DAGs, given skeleton and layering have the same probability. 

\subsection{Properties of the Distribution} Having declared that we are restricting ourselves to the {\em minimal} layering, so that our prior is purely a prior over graph structures (and the graph structure implies the minimal layering), we can now proceed to present some straightforward properties of the distribution.

Firstly, the probability of the empty graph is clearly equal to the probability that there is exactly one class. This is:

\[ \mathbb{P}_\alpha (\mbox{empty graph}) = \frac{(d-1)!}{\prod_{j=2}^d (j-1 + \alpha)}.\]

\noindent Clearly, as $\alpha \rightarrow 0$, $\mathbb{P}_\alpha(\mbox{empty graph}) \rightarrow 1$.

\paragraph{The case of $\beta_{i;a,b} = \beta_i$ for all $a < b$, $i = 1,2$} When $\beta_{i;a,b} = \beta_i$, $i = 1,2$ for all $a < b$, it is possible to compute some reasonably straightforward properties of the prior.  In this case, the prior is a function of three parameters; $\alpha, \beta_1$ and $\beta_2$. One convenient measure of sparsity is to consider the expected number of edges and compare it to the total number of possible edges for a DAG on $d$ nodes, which is  $\frac{1}{2}d(d-1)$. 

\begin{eqnarray*} \mathbb{E}[\mbox{edges}] &=& \left(\frac{\beta_2}{\beta_1 + \beta_2}\right)(d - \mathbb{E} \left [ m_{\rho(1)} \right ]) + \left(\frac{\beta_1}{\beta_1 + \beta_2}\right) \mathbb{E} \left [\sum_{i=1}^{K-1}\sum_{j=i+1}^K m_{\rho(i)}m_{\rho(j)} \right ]\\
 &=&  \left(\frac{\beta_2}{\beta_1 + \beta_2}\right)(d - \mathbb{E} \left [ m_{\rho(1)} \right ]) + \frac{1}{2}\left(\frac{\beta_1}{\beta_1 + \beta_2}\right)\left(d^2 - \mathbb{E} \left [\sum_{i=1}^{K}  m_i^2\right ]\right).
\end{eqnarray*}

\noindent This comes from the following: firstly, each of the $d$ nodes has at least one parent (a compelled edge) for the skeleton, except for those in the lowest layer. Then there are the remaining edges. For $\rho(j) = \rho(i) + 1$, consider the non-compelled edges which are added in, each with probability $\frac{\beta_1}{\beta_1 + \beta_2}$. For the second line, only nodes within the same block cannot have an edge.   Recall that $\mathbb{E}[K] \simeq \alpha \ln (d)$. Now suppose that $\alpha(d)$ varies with $d$ and that $\alpha^\prime := \alpha(d) \ln(d)$ is kept constant. For fixed $\alpha^\prime := \alpha(d) \ln (d) > 0$, can be shown that $\lim_{d \rightarrow +\infty} \frac{1}{d^2} \mathbb{E} \left [ \sum_{i=1}^K m_i^2 \right ] = f(\alpha^\prime)$ where $f$ is a decreasing function defined on $\mathbb{R}_+$ which satisfies $f(0) = 1$ and $f(+\infty) = 0$. It follows that 

\[ \lim_{d \rightarrow +\infty} \frac{\mathbb{E} \left [ \mbox{edges} \right ]}{\frac{1}{2}d(d-1)} = \left(\frac{\beta_1}{\beta_1 + \beta_2}\right)\left ( 1 - f(\alpha^\prime)\right).\]

\noindent This may be considered as a sparsity index, since $\frac{1}{2}d(d-1)$ is the maximum number of possible edges. From this expression, it is clear that there are two parameters for controlling the sparsity. Firstly, low values of  $\left(\frac{\beta_1}{\beta_1 + \beta_2}\right)$ lead to a sparse graph. Secondly, low values of $\alpha \ln (d)$ lead to a sparse graph.  

As discussed in \citeauthor{Mansinghka06structuredpriors}, the role of $\beta_1 + \beta_2$ is also of interest. While the expected number of edges only depends on $\beta_1$ and $\beta_2$ only through $\frac{\beta_1}{\beta_1 + \beta_2}$, the sum of $\beta_1 + \beta_2$ provides a {\em consistency} parameter. If $\beta_1 + \beta_2$ is small (with $\frac{\beta_2}{\beta_1 + \beta_2} = 0.5$), the outcome of the random variable $\xi_{ij}$, which has $\mbox{Beta}(\beta_1,\beta_2)$ distribution will typically take values either close to $0$ or close to $1$. This prior will therefore generate graphs which either have many edges between a chosen pair of classes or few edges between a chosen pair of classes (while the average edge probability is $0.5$). If $\beta_1 + \beta_2$ is large, the edge probabilities between chosen pairs of classes will be similar.

\section{The Posterior Distribution and Monte Carlo Methods}
\label{sec:marg_likelihood}

Now suppose that the $d$ nodes of the graph are random variables $(X_1, \ldots, X_d)$ and an $n \times d$ data matrix ${\bf x}$ containing $n$ independent instantiations of the $d$  variables under   study  (identified as nodes) is given.  

Let ${\bf X}$ denote the random matrix from which ${\bf x}$ is an observation.  The aim is to  infer a DAG along which the probability distribution factorises and the class structure, which is the minimal layering of the DAG.

\paragraph{Cooper-Herskovits Likelihood and Posterior} Our assumption is that once the graph structure is known, the class structure (or layering) gives no further information. That is:

\[ \mathbb{P}_{{\bf X}|{\cal G}, Z} = \mathbb{P}_{{\bf X}|{\cal G}}.\]

\noindent This may be evaluated explicitly and is well known as the Cooper-Herskovits likelihood, derived in \citep{Cooper1992Bayesian}, given by: 

\begin{equation}\label{eqCH}
  \mathbb{P}_{{\bf X}|{\cal G}}({\bf x}|G) =   \prod_{j=1}^d \prod_{l=1}^{q_j} \frac{\Gamma (\sum_{i=1}^{p _j} \gamma_{jil}) }{\Gamma (n(\pi^{(l)}_j) + \sum_{i=1}^{p_j} \gamma_{jil})} \prod_{i=1}^{p_j} \frac{\Gamma(\gamma_{jil}+ n(x_j^{(i)},\pi_j^{(l)}))}{\Gamma(\gamma_{jil})}. 
\end{equation}

\noindent Here $(x_j^{(1)}, \ldots, x_j^{(p_j)})$ is the state space for variable $X_j$, while $(\pi_j^{(1)}, \ldots, \pi_j^{(q_j)})$ is a listing of the possible parent configurations for variable $j$ in the Bayesian network. The parameters $(\gamma_{jil}: j = 1,\ldots, d; i = 1, \ldots, p_j; l = 1, \ldots, q_j)$ are hyperparameters that can be chosen depending on prior information. We take $\gamma_{jil} = \gamma > 0$ all equal, so that there is one free parameter $\gamma$ for the Cooper-Herskovits likelihood. 

In the algorithms described below, computational savings are made if only a small part of the graph needs to be considered. Suppose only one edge at a time is changed. Let $G_{ij} = 1$ if there is an edge $i \mapsto j$ and $0$ if there is no edge $i \mapsto j$. Let $(x_i^1, \ldots, x_i^{p_i})$ denote the state space of variable $X_i$. Let $G^-$ denote a DAG where $G_{ij} = 0$ and let $G^+$ denote the graph $G^-$ with $G_{ij}$ replaced by $G_{ij} = 1$. Assume that $G^+$ is a DAG. Let $q_{j+}$ denote the number of parent configurations for variable $j$ in $G^+$ and $q_{j-}$ the number in $G^-$. Note that $q_{j+} = p_iq_{j-}$. The ratio $\frac{\mathbb{P}_{{\cal X}|{\cal G}}({\bf x}|G^+)}{\mathbb{P}_{{\cal X}|{\cal G}}({\bf x}|G^-)}$ may be computed, from~\eqref{eqCH}, as:

\begin{equation} 
\resizebox{.90\hsize}{!}{
$\frac{\mathbb{P}_{{\cal X}|{\cal G}}({\bf x}|G^+)}{\mathbb{P}_{{\cal X}|{\cal G}}({\bf x}|G^-)}  = 
  \left(\frac{\Gamma (p_j\gamma) }{\Gamma(\gamma)^{p_j}}\right)^{q_{j-}(p_i - 1)}\label{eqCHupdate}
 \prod_{l=1}^{q_{j-}} \frac{\Gamma(p_j\gamma + n(\pi_j^{-(l)}))}{\prod_{k=1}^{p_j} \Gamma(p_j\gamma + n(\pi_j^{-(l)},x_i^{(k)}))} 
\prod_{k=1}^{p_i}\prod_{\alpha = 1}^{p_j} \frac{\Gamma(\gamma + n(\pi_j^{-(l)}, x_j^{(\alpha)}))}{\prod_{k=1}^{p_j} \Gamma(p_j\gamma + n(\pi_j^{-(l)},x_i^{(k)},x_j^{(\alpha)}))},$
}
\end{equation}
\noindent where $(\pi_j^{-(1)}, \ldots, \pi_j^{-(q_{j-})})$ is an enumeration of the parent configurations of variable $j$ in graph $G^-$. This formula only depends on the variable $X_j$, the parent set $\mbox{Pa}_j^-$ of $X_j$ in graph $G^-$ and the additional variable $X_i$. \vspace{5mm}

\noindent We tried two algorithms; a Gibbs sampler and a stochastic optimisation algorithm. These two algorithms have different objectives. The aim of a Gibbs sampler is to generate an empirical distribution which approximates the posterior distribution. This is useful for exploring properties of the posterior. The stochastic optimisation algorithm simply looks for the maximum aposteriori structure. 

While the Gibbs sampler is theoretically ergodic, convergence was very slow. The main difficulty was that nodes in the wrong layer had difficulty bubbling up to their correct layer.

With this in mind, the stochastic optimisation algorithm was constructed to ensure mobility between layering. The moves were constructed by choosing a node at random and re-assigning it according to to a Hoppe-Ewens urn scheme. 

\subsection{Gibbs Sampler}

We now describe the Gibbs sampling scheme. For the Minimal Hoppe-Beta posterior, there are $d(d-1)$ variables;   $(G_{ij})_{(i,j) \in \{1, \ldots, d\}^2, i \neq j}$. Variable $G_{ij}$ is a binary variable taking value $1$ if and only if the graph $G$ has a directed edge $i \mapsto j$. 

We consider a Gibbs sampler, working through these variables one by one, conditioning on all the other variables. Let $\underline{X} = (X_1, \ldots, X_{d(d-1)})$ represent an enumeration of the binary variables $(G_{ij})_{i \neq j}$. Denote the $i$th sample $(G^{(i)})$ by $(x^{(i)}_1, \ldots, x^{(i)}_{d(d-1)})$. The algorithm proceeds as follows: \vspace{5mm}

\noindent {\bf Initialisation} Let $(G^{(0)})$ be the initial condition, where $G^{(0)}$ is the empty graph.\vspace{5mm}
 
\noindent {\bf Sampling} To generate a random sample of size $k$, for each sample $i \in \{1, \ldots k\}$, do the following:
\begin{itemize}
 \item For  $j = 1, \ldots, d(d-1)$, sample $x^{(i)}_j$ from the conditional distribution 
\begin{equation}\label{eqsamppr} \mathbb{P}_{X_j|\underline{X}_{-j}}(.|x_1^{(i)},\ldots, x_{j-1}^{(i)},x_{j+1}^{(i-1)}, \ldots, x_{d(d-1)}^{(i-1)}) =:\alpha_j
\end{equation}
\end{itemize}

\noindent where $\underline{X}_{-j} = (X_1, \ldots, X_{j-1},X_{j+1}, \ldots, X_{d(d-1)})$.\vspace{5mm}

\noindent The probability $\mathbb{P}_{{\cal G}}$ from~\eqref{eqminhopbet} together with the Cooper-Herskovits likelihood gives the posterior:

\begin{equation}
  \mathbb{P}_{{\cal G}|{\bf X}}(G|{\bf x}) \propto    \mathbb{P}_{{\cal G}}(G) \mathbb{P}_{{\bf X}|{\cal G}}({\bf x}|G) =: \mathbb{S}(G|{\bf x})
  \label{eqdzjoint2}
\end{equation}

 \noindent If $X_j$ to be sampled is a graph edge variable $G_{xy}$ then it takes value $0$ with probability $1$ if the graph with an edge $x \mapsto y$ is illegal (i.e. leads to a directed loop). Otherwise, $\alpha_j(G_{xy} = 1)$ is computed from Equations~\eqref{eqminhopbet} and~\eqref{eqCHupdate}.
\subsection{Stochastic search algorithm} 
\label{sec:stochastic_search_algorithm}
In this section we present the stochastic search algorithm used for maximising, at least approximately, the posterior score function defined in Equation~\eqref{eqdzjoint2}.
The algorithm generates a non-reversible Markov chain of DAGs $\{G^{(t)}\}_{t=1}^T$, started from an initial DAG $G^{(0)}$.
A move from site $G^{(t)}$ to site $G^{(t+1)}$ is then made by proposing a move to $G^\prime$ according to the distribution $Q(G^{(t)};.)$ where the transition kernel $Q$ is described below and then accepting the move with probability

\begin{align}
\alpha_{G^{(t)},G^\prime} = 
\min \bigg \{1\,,\, \frac{\mathbb S(G^\prime| \mathbf x) }{\mathbb S(G^{(t)}| \mathbf x) }\bigg \}.
\end{align}

\paragraph{The Proposal Kernel Q}
For each DAG $G^{(t)}$ in the chain we let $\underline z^{(t)}$ denote the corresponding layering.
The algorithm is initiated with the empty graph $G^{(0)}$, thus $\underline z^{(0)}$, the minimal layering of $G^{(0)}$ is a vector of zeros.
We proceed as follows:
\begin{enumerate}
\item 
Let  $\widetilde{z} =z^{(t)}$, the minimal layering of $G^{(t)}$ and call $\widetilde{z}_{\mbox{old}} = \widetilde{z}$.
Choose a node $x$ at random and remove it (each node with equal probability).
If the chosen node   was in a layer of its own, 
decrease the order of each layer with order $\geq j$, that is set  $\widetilde{z}_{\mbox{new}} = \widetilde{z}_{\mbox{old}}-1$ for all $\widetilde{z}_k$ such that $\widetilde{z}_k \ge j$ and then set $\widetilde{z} = \widetilde{z}_{\mbox{new}}$ 
so that there will be  no  empty  layers. Set $\widetilde{z} := \widetilde{z}_{\mbox{new}}$. 

\item 
Let $K$ be the number of layers left. 
Now re-assign the node $x$  in the following way:  put it in a new layer, ($K+1$) with probability $\frac{\widetilde{\alpha}}{\widetilde{\alpha} +(d-1)}$ and in layer $j$ for $1\le j \le K$ with probability $\frac{m_j}{\widetilde{\alpha} + (d-1)}$ where $m_j$ is the   current occupancy number   in layer $j$. In other words, the node is reassigned according to a Hoppe-Ewens scheme with parameter $\widetilde{\alpha}$.

\item If $x$ is now in layer $K+1$, let $m$ be chosen randomly according to the uniform distribution over $\{1,\dots,K+1\}$ and set: 
 
\begin{align}
    \begin{cases}
      \hat{z}_{j} = \widetilde{z}_j, & j: 1\le \widetilde{z}_j\le m-1 \\
      \hat{z}_{i} = m, & j:j=i\\ 
      \hat{z}_{j}= \widetilde{z}_j+1, & j:  m+1\le \widetilde{z}_j\le K 
    \end{cases} 
\end{align}
That is, the new layer is placed in position $m$ and the indices from $m$ to $K$ are pushed one place to the right to 
compensate.  
This process has generated a new partition vector $\hat{z}$.

\item
To construct a proposal $G^\prime$, let $\widetilde{G} = G^{(t)}$. The minimal layering, $z^\prime$ for $G^\prime$ will be derived from $\hat{z}$ constructed above.  

\begin{enumerate}
\item Remove those edges contradicting the partition $\hat{z}$ (i.e {\em from} class $a$ {\em to} class $b$ for $b \leq a$).
\item If $\hat{z}_x \geq 2$, if $x$ does not have any parent in layer $\hat{z}_x-1$, add a single compelled edge $(p,x)$ where $p$ is randomly chosen from layer $\hat{z}_x - 1$, each with probability $\frac{1}{\hat{z}_x-1}$. 
\item If $\hat{z}_x=1$, add $x$ as a parent to each node in $\hat{z}_x+1$.
\item Steps (a), (b), (c) have generated a DAG $\widehat{G}$. Let $\hat{z}_{\mbox{new}}$ denote the layering of DAG $\widehat{G}$. This differs from $\hat{z}$ only if there are children of $x$ in graph $\widetilde{G}$ for which $x$ was the {\em only} parent in $\widetilde{G}$. For such nodes, $\hat{z}_{\mbox{new},c} = \max_{p \in \mbox{Pa}_{\widetilde{G}}(c)\backslash\{x\}} \widetilde{z}_p + 1$. Here $\mbox{Pa}_{\widetilde{G}}(c)\backslash\{x\}$ denotes the parent set of $c$ in graph $\widetilde{G}$ without $x$. This process continues recursively for the children of $x$ until every node in the graph is in it minimal layer.
\end{enumerate}

\item Let $\hat{z}$ denote the current partition vector ($\hat{z}$ from step 3 modified by step 4 (d)). Now choose two nodes $y$ and $w$ at random. If  $\hat{z}_y \geq \hat{z}_w$, do nothing. Otherwise, toggle the edge between $y$ and $w$ (i.e remove it if exists and add it if it does not exists). Let $G^\prime$ denote the resulting graph. 
\item Let $z^\prime$ be the minimal layering of graph $G^\prime$. If $(y,w)$ was removed, then place $w$  in its minimal layer as in step 4 (d).
\end{enumerate}

\noindent This process generates a proposal DAG $G^\prime$ with minimal layering $z^\prime$.\vspace{5mm}

\noindent Alternatively, the layering $\hat{z}$ from step 3 could have been taken as the minimal layering of the graph, with step 4(d) replaced by adding in a minimal number of edges to ensure that children $c$ of node $x$ were in the appropriate class. The rejection rate was higher with this approach.

\section{Simulations} 
\label{sec:simulations}
\subsection{Data generation} 
\label{sub:data_generation}

The simulation studies were made on random samples from the HEPAR II network shown in \autoref{fig:heparII}.
All the variables are binary and the parameter in the conditional probability tables where independently sampled from a $Beta(0.5, 0.5)$- distribution and were then adjusted so to ensure that they lie in the range $(0.1,0.9)$.

\subsection{Simulation results} 
\label{sub:simulation_results}
The stochastic search algorithm with the three types of priors (uniform, Hoppe-Beta and minimal Hoppe-Beta) was tested on 500  samples from the HEPAR II network shown as an adjacency matrix in \autoref{fig:heparII_adjmat} \footnote{The same study were performed on 10 different datasets showing similar results, for that reason, results from only one representative dataset are considered here.}.
For the minimal Hoppe-Beta and the Hoppe-Beta prior, the hyper parameters were set to $\beta_{1;i,i+1} = 2$, $\beta_{2;i,i+1} = 1$ for $i=1,\dots,K-1$ and $\beta_{1;i,j} = 1$, $\beta_{2;i,j} = 2,$ for $i=1,\dots,K-2$ and $\, K\geq j>i+1$. 
This reflects the prior knowledge that between layers $i$ and  $i+1$, the graph is denser, while between layer $i$ and $j$ where $j>i+1$ the graph is sparser.
For the sparsity parameter we used $\alpha=1$.
For all the three priors we used $\gamma=1$ and $\widetilde \alpha = 1$ .

As a measure of goodness of fit 
we use the sensitivity (TPR) and specificity (SPC) defined as follows
\begin{align*}
  TPR \stackrel{\rm def}{=} \frac{\text{\emph{Number of edges correctly identified}}}{\text{\emph{Number of edges correctly identified + Number of edges falsely rejected}}}
\end{align*}
and
\begin{align*}
  SPC \stackrel{\rm def}{=} \frac{\text{\emph{Total number of edges in the skeleton}}}{\text{\emph{Total number of edges in the skeleton + Number of edges wrongly included}}}.
\end{align*}
Here, the skeleton of a directed network means its undirected version.

We ran 10 trajectories for 5000 iterations for each prior and the results are summarized in \autoref{tab:results} and in \autoref{fig:stochastic_search_hoppe}-\autoref{fig:stochastic_search_hoppe_beta}.
Since the Hoppe-Beta prior is joint prior over partitions and DAGs, we used this joint score when evaluation this method.
The mean and standard error for the TPR and SPC taken over the 100 best scoring graphs among the 10 chains are found in~\autoref{tab:results}.
As seen the minimal Hoppe-Beta prior shows the best results in terms of SPC and TPR among the methods.

Read from the top, \autoref{fig:stochastic_search_hoppe} - \autoref{fig:stochastic_search_hoppe_beta} show in the left columns, the trajectories for the prior, the Cooper-Herskovits likelihood and the scoring function. 
The black line in each plot is the corresponding function evaluated at the true HEPAR II network.
The right column shows heat maps over: the top 100 scoring DAGs among all 10 trajectories, the single top scoring DAG among the 10 trajectories and the top scoring graph in each trajectory.
Looking at the heat maps for the top 100 scoring graphs from the 10 chains, we see that we get the clearest results with the minimal Hoppe-Beta prior.

\begin{table}[h]
\begin{center}
\begin{tabular}{|l|l|l|}
\hline 
Prior &  TPR  & SPC   \\
\hline
Uniform & 0.77 / 0.09 &  0.79 / 0.07 \\
Minimal Hoppe-Beta & 0.85 / 0.10  &  0.86 / 0.06\\
Hoppe-Beta & 0.80 / 0.19 & 0.84 / 0.12 \\
\hline
\end{tabular}
\end{center}
\caption[Simulation results]{The results reported are mean values and standard error (mean/std. error) taken over the 100 best scoring DAGs among the 10 trajectories of the stochastic search algorithm ran for 5000 iterations each.}
\label{tab:results}
\end{table}
\begin{figure}[H]
	\centering
	\includegraphics[width=0.40\textwidth]{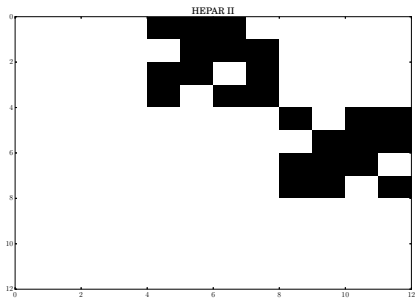}
	\caption{The HEPAR II network shown as an adjacency matrix.}
	\label{fig:heparII_adjmat}
\end{figure}
\begin{figure}[H]
	\centering
	\includegraphics[width=0.95\textwidth]{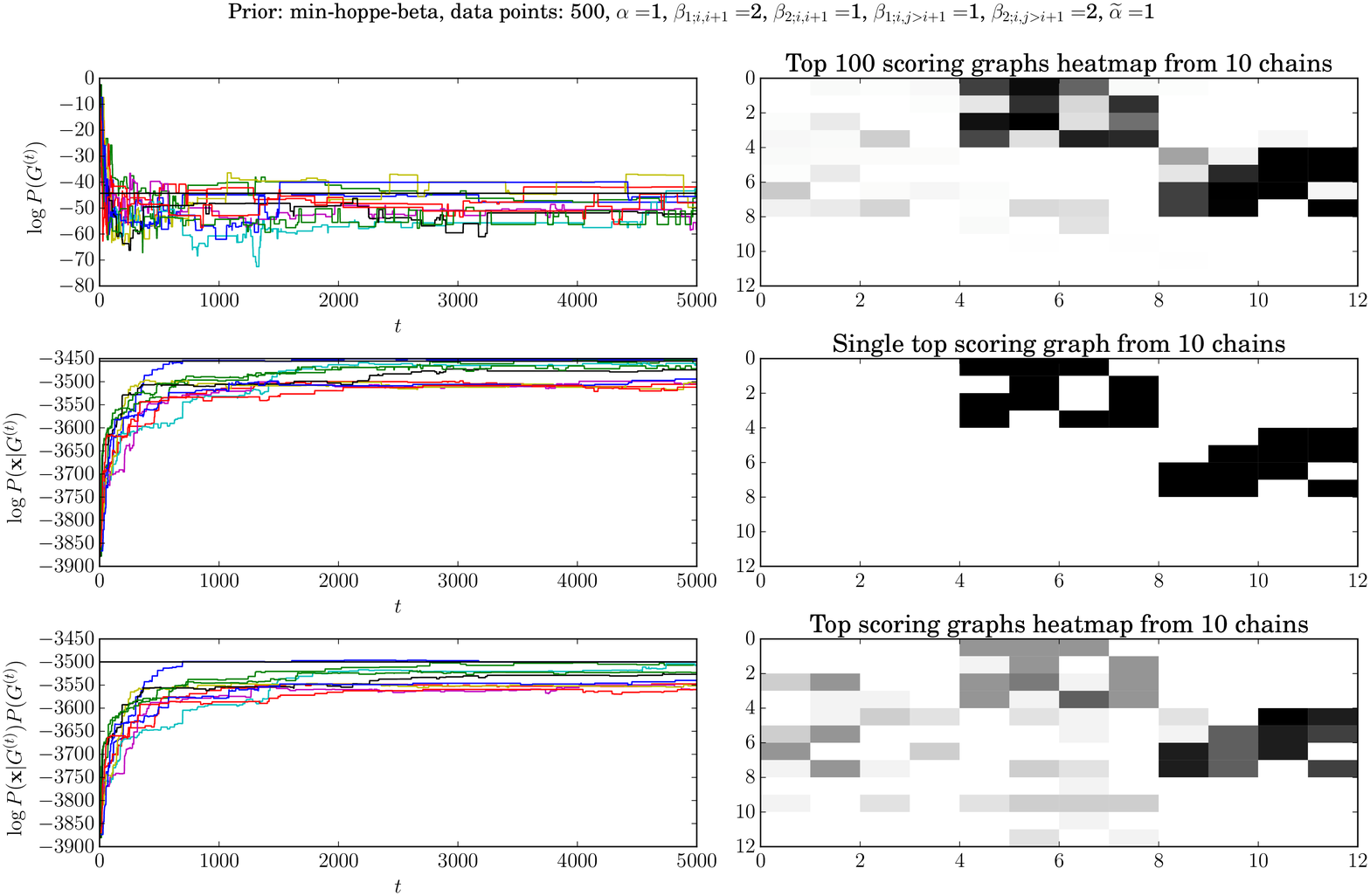}
	\caption{Results from 10 stochastic search trajectories with the minimal Beta-Hoppe prior ran for 5000 iterations each.}
	\label{fig:stochastic_search_hoppe}
\end{figure}

\begin{figure}[H]
	\centering
	\includegraphics[width=0.95\textwidth]{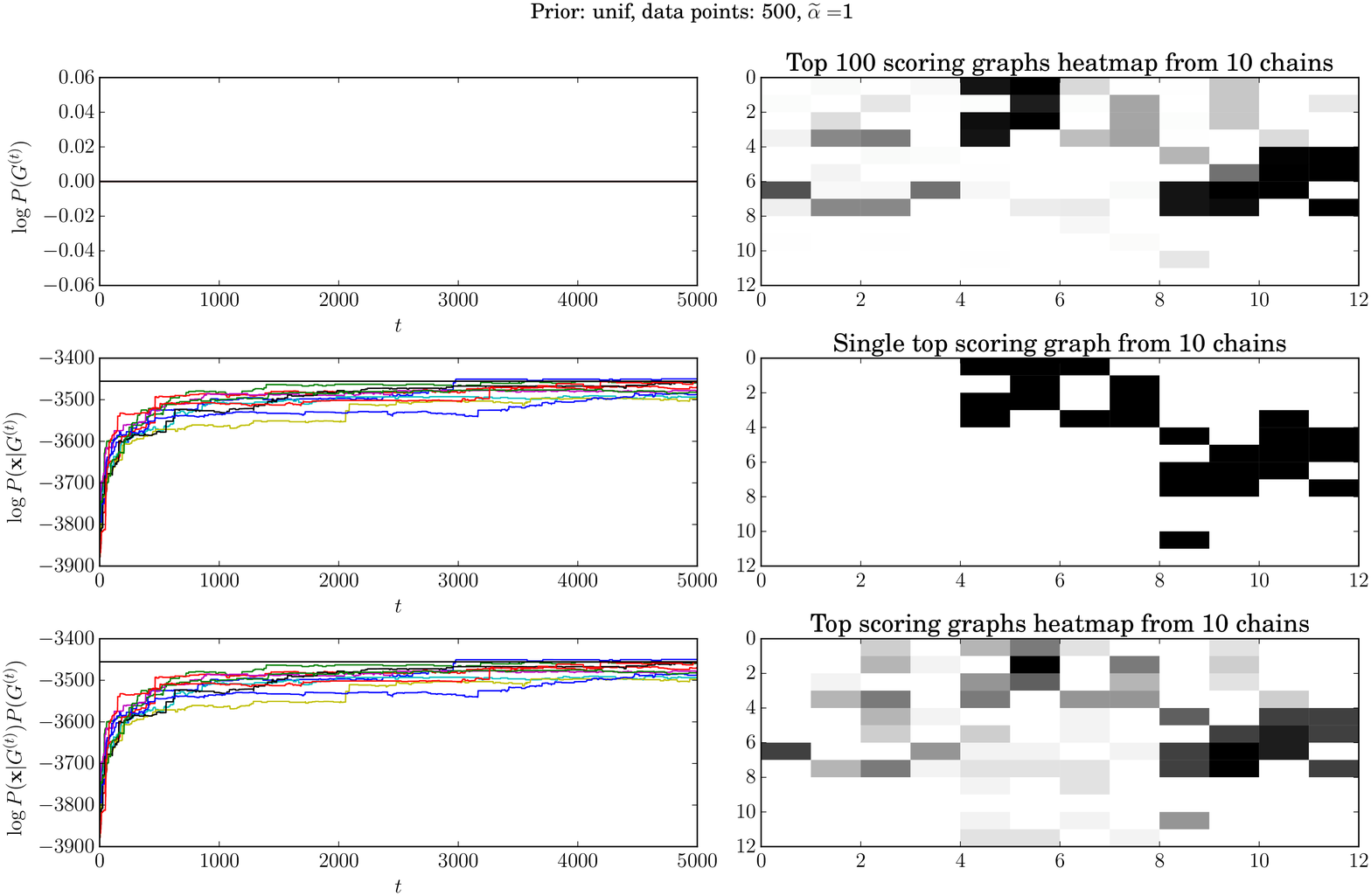}
	\caption{Results from 10 stochastic search trajectories with the uniform prior ran for 5000 iterations each.}
	\label{fig:stochastic_search_unif}
\end{figure}

\begin{figure}[H]
	\centering
	\includegraphics[width=0.95\textwidth]{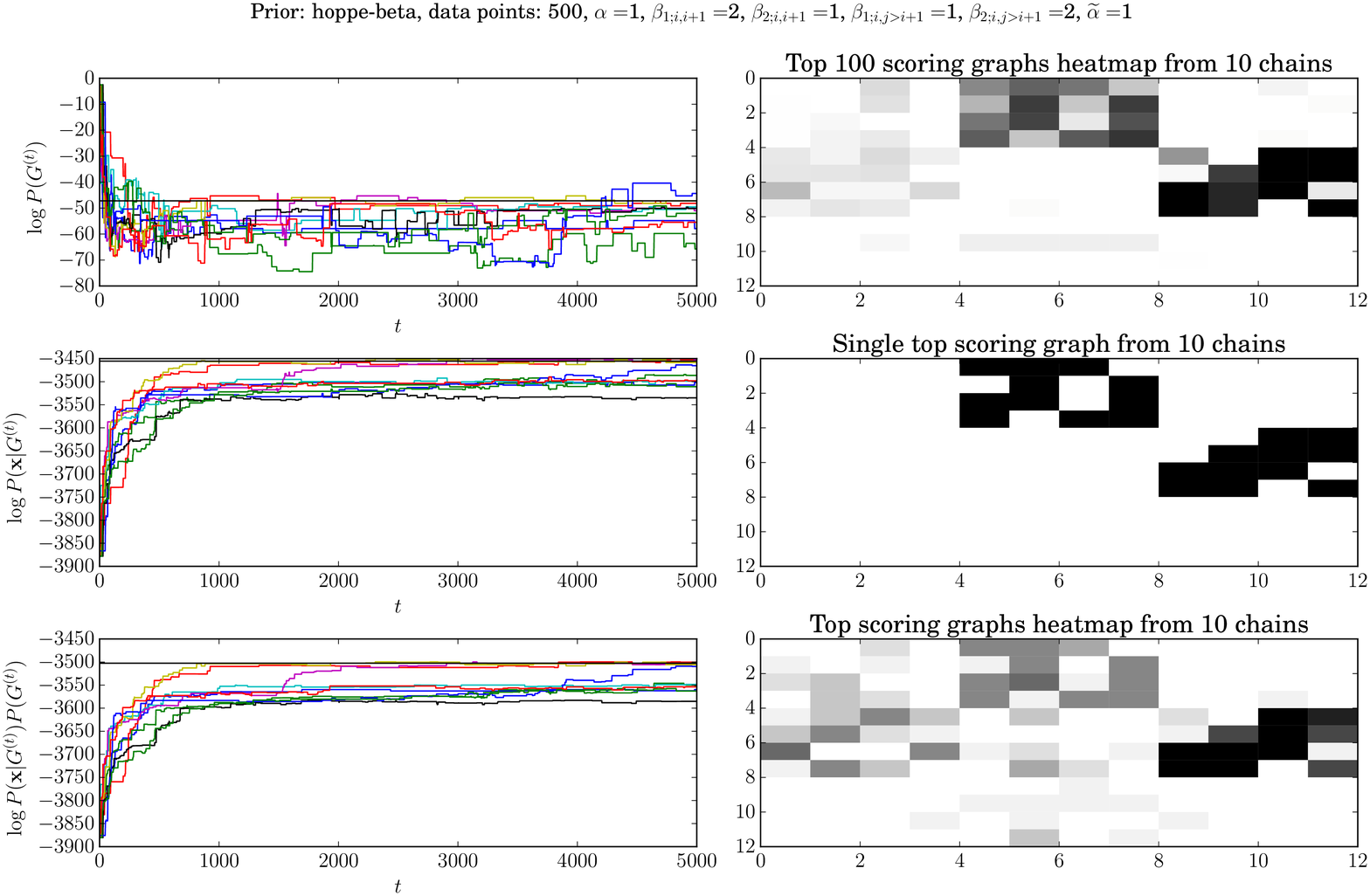}
	\caption{Results from 10 stochastic search trajectories with the Hoppe-Beta prior ran for 5000 iterations each.}
	\label{fig:stochastic_search_hoppe_beta}
\end{figure}


\section{Summary and Conclusions} 
\label{sec:summary_and_conclusions}
This article considered the {\em Ordered Block Model} introduced by \citet{Kemp2004}, which was applied to Bayesian Networks for multivariate data by  \citet{Mansinghka06structuredpriors}. We introduce a new prior distribution over graph structures, which is a modification of the Hoppe-Beta prior introduced by \citeauthor{Kemp2004}. The probability distribution over graphs has an explicit closed form. The parameters can adjusted to determine {\em sparsity} and {\em consistency}; consistency refers to the similarity of edge probabilities between nodes of different pairs of classes. We call this prior the {\em minimal Hoppe-Beta}. It builds on, and represents an advantage over the Hoppe-Beta of \citeauthor{Kemp2004}, which is a joint prior over graphs and classes. With the minimal Hoppe-Beta, the class structure is implied by the graph.

The prior is then tested experimentally; a posterior is obtained via the Cooper-Herskovits likelihood. We run a stochastic optimisation scheme and compare the output with three different priors; the uniform, the Hoppe-Beta of \citeauthor{Kemp2004} and the new Minimal Hoppe-Beta. 

The Minimal Hoppe-Beta compares favourably.


\setlength{\baselineskip}{2ex}
\bibliographystyle{plainnat.bst}
\bibliography{allbib,Remote}

\begin{thebibliography}{11}
\providecommand{\natexlab}[1]{#1}
\providecommand{\url}[1]{\texttt{#1}}
\expandafter\ifx\csname urlstyle\endcsname\relax
  \providecommand{\doi}[1]{doi: #1}\else
  \providecommand{\doi}{doi: \begingroup \urlstyle{rm}\Url}\fi

\bibitem[Cooper and Herskovits(1992)]{Cooper1992Bayesian}
Gregory~F. Cooper and Edward Herskovits.
\newblock A bayesian method for the induction of probabilistic networks from
  data.
\newblock \emph{Machine Learning}, 9\penalty0 (4):\penalty0 309--347, 1992
  1992.

\bibitem[Healy and Nikolov(2002)]{graph_drawings}
Patrick Healy and Nikola Nikolov.
\newblock {How to Layer a Directed Acyclic Graph}.
\newblock In Petra Mutzel, Michael Janger, and Sebastian Leipert, editors,
  \emph{Graph Drawing}, volume 2265 of \emph{Lecture Notes in Computer
  Science}, chapter~2, pages 563--566--566. Springer Berlin / Heidelberg,
  Berlin, Heidelberg, February 2002.
\newblock ISBN 978-3-540-43309-5.
\newblock \doi{10.1007/3-540-45848-4\_2}.
\newblock URL \url{http://dx.doi.org/10.1007/3-540-45848-4\_2}.

\bibitem[Hoppe(1984)]{hoppe2009}
Fred~M.. Hoppe.
\newblock Polya-like urns and the ewens' sampling formula.
\newblock \emph{Journal of Mathematical Biology 20: 91.}, 1984.

\bibitem[Kemp et~al.("2004")Kemp, Griffiths, and Tenenbaum]{Kemp2004}
Charles Kemp, Thomas~L. Griffiths, and Joshua~B."" Tenenbaum.
\newblock {"Discovering Latent Classes in Relational Data"}.
\newblock Technical report, sep "2004".
\newblock URL \url{"http://cog.brown.edu/\~{}gruffydd/papers/blockTR.pdf"}.

\bibitem[Kuipers and Moffa(2013)]{kuipers2013uniform}
Jack Kuipers and Giusi Moffa.
\newblock Uniform random generation of large acyclic digraphs.
\newblock \emph{Statistics and Computing}, pages 1--16, 2013.
\newblock \doi{10.1007/s11222-013-9428-y}.

\bibitem[Li et~al.(2005)Li, Xiao, and Beavers]{Wing-Ning}
Wing-Ning Li, Zhichun Xiao, and Gordon Beavers.
\newblock On computing the number of topological orderings of a directed
  acyclic graph.
\newblock \emph{Congress numeratium}, pages 143 -- 159, 2005.

\bibitem[Mansinghka et~al.(2006)Mansinghka, Kemp, and
  Tenenbaum]{Mansinghka06structuredpriors}
V.~K. Mansinghka, C.~Kemp, and J.~B. Tenenbaum.
\newblock Structured priors for structure learning.
\newblock In \emph{In Proceedings of the 22nd Conference on Uncertainty in
  Artificial Intelligence (UAI}. AUAI Press, 2006.

\bibitem[Mukherjee and Speed(2008)]{mukherjee2008network}
Sach Mukherjee and Terence~P Speed.
\newblock mukherjee2008network, network inference using informative priors.
\newblock \emph{Proceedings of the National Academy of Sciences}, 105:\penalty0
  14313--14318, 2008.

\bibitem[Onisko et~al.(2000)Onisko, Druzdzel, and
  Wasyluk]{Onisko00learningbayesian}
Agnieszka Onisko, Marek~J. Druzdzel, and Hanna Wasyluk.
\newblock Learning bayesian network parameters from small data sets:
  Application of noisy-or gates, 2000.

\bibitem[Shwe et~al.(1991)Shwe, Middleton, Heckerman, Henrion, Horvitz,
  Lehmann, and Cooper]{Qmr91probabilisticdiagnosis}
M.~A. Shwe, B.~Middleton, D.~E. Heckerman, M.~Henrion, E.~J. Horvitz, H.~P.
  Lehmann, and G.~F. Cooper.
\newblock Probabilistic diagnosis using a reformulation of the internist-1/qmr
  knowledge base. i. the probabilistic model and inference algorithms.
\newblock \emph{Methods Archive}, 30\penalty0 (4):\penalty0 241--255, 1991.
\newblock URL
  \url{http://www.schattauer.de/t3page/1214.html?manuscript=14480&L=1}.

\bibitem[Tamassia(2008)]{Tamassia2008}
Roberto Tamassia.
\newblock \emph{{Handbook of Graph Drawing and Visualization (Discrete
  Mathematics and Its Applications) - in preparation; manuscript available at
  http://www.cs.brown.edu/\~{}rt/gdhandbook}}.
\newblock Chapman \& Hall/CRC, March 2008.
\newblock ISBN 1584884126.
\newblock URL
  \url{http://www.amazon.com/exec/obidos/redirect?tag=citeulike07-20\&path=ASIN/1584884126}.

\end{thebibliography}

\end{document}